\documentclass[numbers]{article}

\usepackage[main, final]{neurips_2026}

\usepackage[utf8]{inputenc} 
\usepackage[T1]{fontenc}    
\usepackage{hyperref}       
\usepackage{url}            
\usepackage{booktabs}       
\usepackage{amsfonts}       
\usepackage{nicefrac}       
\usepackage{microtype}      
\usepackage{xcolor}         
\usepackage{amsmath}
\usepackage{amsthm}
\usepackage{amssymb}
\usepackage{graphicx}
\graphicspath{{../}{../images/}{./}{images/}}
\usepackage{subcaption}
\usepackage{booktabs}
\usepackage{algorithm}
\usepackage{algpseudocode}
\usepackage{float}
\usepackage{enumitem}
\usepackage{wrapfig}
\usepackage{relsize}
\usepackage{bbm}
\usepackage{tikz}
\usepackage{multirow}
\usepackage{array}
\usetikzlibrary{
  arrows.meta,
  positioning,
  calc,
  decorations.pathreplacing,
  backgrounds,
  fit
}
\newtheorem{definition}{Definition}

\newtheorem{theorem}{Theorem}[section]

\newtheorem{lemma}[theorem]{Lemma}

\def\vx{\mathbf{x}}
\def\vX{\mathbf{X}}

\def\vE{\mathbf{E}}
\def\vb{\mathbf{b}}
\def\vp{\mathbf{p}}
\def\ER{{Erd\H{o}s--R\'enyi}}
\def\std{\textup{\texttt{std}}}
\newcommand{\norm}[1]{\left\lVert #1 \right\rVert}
\newcommand{\normFn}[1]{\left\lVert #1 \right\rVert_{F,n}}
\title{When Edge Independence Fails: Joint Graph Diffusion with Latent Sociability Priors}

\author{%
  Adarsh Jamadandi$^{1}$, Nicolas Keriven$^{1}$, Aline Roumy$^{2}$ \\
     CNRS, IRISA, Rennes, France$^{1}$, INRIA, Rennes, France$^{2}$\\
\texttt{adarsh.jamadandi@irisa.fr}, \texttt{nicolas.keriven@cnrs.fr}, \texttt{aline.roumy@inria.fr}
}

\begin{document}

\maketitle

\begin{abstract}
Discrete graph diffusion models corrupt graph structure through independent edge updates with a Markovian noise model. Combined with the exchangeability requirement on the output distribution, the terminal prior is necessarily an Erd\H{o}s--R\'enyi (E-R) random graph. This has a fundamental consequence for unconditional graph generation without node features: in the large graph limit, any GNN denoiser applied to an E-R graph approximately outputs an E-R graph, making effective denoising difficult. In practice, generated graphs fail to recover meaningful node heterogeneity and higher-order structure. Previous models introduced specialized node features to compensate, framing this as an expressivity problem, while the role of the prior remained unexamined. We instead address the root cause by introducing the \emph{Latent Sociability Prior} (LSP) and a \emph{joint} diffusion process that co-evolves latent node sociability variables and graph structure, removing edge independence while preserving exchangeability. Unlike existing latent graph diffusion methods, the sociability mechanism requires no complex encoder: its sole role is to guide edge updates and preserve node heterogeneity throughout the trajectory. We validate our approach on multiple graph generation datasets and show that the proposed framework more accurately captures graph distributions than baseline methods without node features.
\end{abstract}

\section{Introduction}
Diffusion models \citep{pmlr-v37-sohl-dickstein15, song2020generativemodelingestimatinggradients,
ddpm, pmlr-v139-nichol21a} have recently emerged as a powerful class of generative models, with
major improvements in image \citep{zhang2023surveydiffusionbasedimage} and video generation
\citep{melnik2024videodiffusionmodelssurvey}. Motivated by this success, diffusion-based methods
have been extended to graph generation. The two most popular approaches are: embedding nodes into a continuous latent space
and applying Gaussian diffusion
\citep{pmlr-v108-niu20a, pmlr-v162-jo22a, evdaimon2024neuralgraphgeneratorfeatureconditioned},
or operating directly in the graph space through a discrete corruption process on edges
\citep{d3pm}. The latter approach has been shown to be more appropriate for generating realistic graph
structures \citep{haefeli2023diffusionmodelsgraphsbenefit,
vignac2023digressdiscretedenoisingdiffusion}, leading to a growing body of work on both
graph diffusion and discrete flow matching
\citep{gat2024discrete, qinmadeira2024defog, roos2026categoricalflowmaps}.

Despite this recent progress, discrete diffusion models also have inherent problems. For
instance, scaling to large graphs is a bottleneck as the computational complexity grows
quadratically with increasing graph size. Recent methods
\citep{chen2023efficientdegreeguidedgraphgeneration, li2024graphmaker, qin2025sparsediff} have
made notable progress on addressing these practical limitations. In this work, however, we
identify a more fundamental limitation at the core of existing graph diffusion models. We find
that, in unconditional graph generation without node features, the combination of edge
independence in the forward process and the requirement that the output graph distribution remains exchangeable
tightly constrains the terminal distribution, leaving Erd\H{o}s--R\'enyi (E-R) as the
\emph{only} admissible prior. We show theoretically and empirically that such graph diffusion
models exhibit a \emph{fixed-point} pathology in the large graph limit: for most architectures
including attention-based GNNs, when initialized from an E-R prior, \emph{any} denoiser
approximately outputs constant edge predictions (Theorem~\ref{thm:concentration_ER}) which is
also an E-R random graph. Consequently, the generated graphs tend to look homogeneous and lack
higher-order structure.

In practice, this failure is commonly attributed to insufficient model expressivity, with
auxiliary node features introduced as a remedy even in the unconditional setting \citep{vignac2023digressdiscretedenoisingdiffusion,chen2023efficientdegreeguidedgraphgeneration,qin2025sparsediff}, leaving the
role of the E-R prior unexamined. While such features may improve performance, they can be
expensive to compute and obscure the real source of failure. Our claim is therefore not simply
that existing models are insufficiently expressive, but that the standard edge-independent
discrete diffusion pipeline is fundamentally mismatched with unconditional graph generation.

As a resolution, we propose a \emph{joint diffusion} framework that co-evolves continuous
latent variables together with the discrete graph structure, combining the strengths of latent
diffusion models and graph-space discrete diffusion. Our framework is built around a new prior
inspired by inhomogeneous random graph models
\citep{Norros_Reittu_2006, Bollob_s_2007, Hofstad_2016, caron-fox}, termed the \emph{Latent
Sociability Prior} (LSP). Each node is endowed with a positive sociability weight, and edge
probabilities are conditioned on these weights, inducing node-level heterogeneity. We jointly
diffuse these variables and the graph edges during both noising and denoising, allowing them to
guide edge updates throughout the diffusion trajectory. As a consequence, the terminal prior is
no longer edge-independent and therefore does not induce the fixed-point behavior of standard
discrete denoisers. Crucially, the sociability mechanism is much simpler than existing latent
graph diffusion approaches, as its role is not to
generate graphs alone but to guide the companion discrete diffusion process. In particular, it requires no pre-trained encoder, and the clean latent variables
admit an approximate closed-form expression from the training data. 

Our contributions can be summarized as follows:
\begin{enumerate}
    \item We identify a fundamental limitation in existing discrete graph diffusion pipelines.
    Without node features, an edge-independent terminal prior together with independent edge
    updates leads to a fixed-point pathology: any denoiser applied to an E-R random graph
    approximately yields an E-R random graph, making it difficult to escape this regime and preventing recovery of meaningful node
    heterogeneity and higher-order structure.

    \item We propose a joint diffusion framework for graph generation that co-evolves continuous
    latent variables together with discrete graph structure, allowing latent node heterogeneity
    to directly guide edge updates throughout both noising and denoising.

    \item We instantiate this framework through the \emph{Latent Sociability Prior} (LSP), a
    novel exchangeable latent-variable prior inspired by inhomogeneous random graph models.
    The LSP admits a closed-form approximation for the clean latent variables, bypassing the
    need for an encoder, while inducing beneficial edge dependencies and preserving node-level
    heterogeneity.

    \item We validate our approach on multiple graph generation benchmarks and show that, even
    without node features, the proposed framework more accurately captures the target graph
    distribution compared to existing baselines.
\end{enumerate}

\section{Related work}
\paragraph{Latent graph diffusion.}
Extending the original diffusion formulation \citep{pmlr-v37-sohl-dickstein15,
song2020generativemodelingestimatinggradients, ddpm, pmlr-v139-nichol21a} for graph generation
involves embedding the graphs into a low-dimensional continuous space using variational
autoencoders (VAEs) and corrupting the graph structure using Gaussian noise in this latent space
\citep{pmlr-v108-niu20a, pmlr-v162-jo22a, evdaimon2024neuralgraphgeneratorfeatureconditioned}.
Although this approach is appealing since it allows for a computationally friendly diffusion
process, it also presents unique challenges. For instance, VAEs are notorious for training
instabilities \citep{unstablevaes} and small reconstruction errors in the encoding-decoding
process can affect the structural validity of generated graphs. Further, generating large graphs is challenging since the
dimensionality of the latent space itself becomes a bottleneck, limiting the structural detail that can be faithfully recovered.

\paragraph{Discrete diffusion for graphs.}
Another popular approach for graph diffusion is to adopt the discrete diffusion paradigm
\citep{d3pm} where the corruption process is carried out directly in the space of graphs through
independent edge additions and deletions. This framework has been shown to be more appropriate
for generating realistic graph structures, resulting in large improvements in both diffusion
models \citep{haefeli2023diffusionmodelsgraphsbenefit,
vignac2023digressdiscretedenoisingdiffusion, chen2023efficientdegreeguidedgraphgeneration,
li2024graphmaker, osman2025lgdclatentgraphdiffusion, ninniri2025graphdiffusioninsertdelete,
karami2025higenhierarchicalgraphgenerative, siraudin2026principled} and discrete flow matching
\citep{gat2024discrete, jiang2026bureswasserstein, qinmadeira2024defog,
roos2026categoricalflowmaps}. However, most of these advancements address practical issues such
as scalability, since the computational burden grows quadratically with increasing graph sizes.
In this work, we identify a more fundamental limitation specific to discrete graph diffusion
models: the edge-independent forward process forces the terminal prior to be Erd\H{o}s--R\'enyi,
which leads to a fixed-point pathology that constrains generation quality independently of model
capacity.

\paragraph{Influence of prior distributions.}
The role of the prior in graph generation has received some attention.
\citep{powerindependent, edgedependency} study the limitations of edge-independent generative
models and show they cannot simultaneously produce diverse graphs and capture higher-order
statistics such as triangle and cycle counts a theoretical finding that directly motivates our
work. In the continuous flow matching setting, \citep{chen2026priorinformedflowmatchinggraph}
use embedding-based source distributions as informed starting points, and
\citep{wijesinghe2026flowetteflowmatchinggraphette} propose motif-based priors via Gromov--Wasserstein-coupled flow matching, improving structural fidelity at the cost of expensive optimal transport and manual motif specification per graph family. These methods operate outside the discrete diffusion setting and do not address the fixed-point pathology we identify. Our approach requires no optimal transport and no domain-specific prior engineering: node heterogeneity emerges directly from the latent sociability variables by construction.

\section{Background on discrete denoising diffusion framework} \label{ddframework}
We recall the discrete denoising diffusion framework for graph generation following \citep{d3pm,vignac2023digressdiscretedenoisingdiffusion}. Let
$\mathcal{D}_{\mathrm{data}}=\{G_1,\dots,G_M\}$ be a dataset of graphs drawn from an unknown distribution $p_{\mathrm{data}}$. For a graph $G=(X,E)$ with node features $X$ and edges $E$, the goal is to learn a distribution $p_\theta(G)$ from which new graphs can be sampled. Node features and edges are represented via one-hot encodings, giving tensors $\mathbf{X} \in \mathbb{R}^{n \times d}$ and $\mathbf{E} \in \mathbb{R}^{n \times n \times b}$, where $d$ is the number of node feature types and $b$ the number of edge categories. Here we simply consider $b=2$ edge categories, with $e_{ij} \in \mathcal{B} = \{0,1\}$ denoting the scalar edge state (absence or presence) between nodes $i$ and $j$, and $\mathbf{e}_{ij} \in \mathbb{R}^b$ its one-hot encoding, organized into the edge tensor $\mathbf{E} \in \mathbb{R}^{n \times n \times b}$. 

Since we focus on unconditional graph generation \emph{without node features} we set $\mathbf{X}=\mathbf{1}$ throughout, unless otherwise written. The forward process defines a sequence of increasingly noisy graphs $G^0, G^1, \dots, G^T$,
where $G^0$ is a clean training graph and $G^T$ is drawn from the terminal prior $\pi_T$. We
denote by $e_{ij}^0 \in \mathcal{B}$ the clean edge state between nodes $i$ and $j$ in $G^0$,
and by $\mathbf{e}_{ij}^0 \in \mathbb{R}^b$ its one-hot encoding. The probability of a noisy
sequence $(G^1, \dots, G^T)$ given $G^0$ factorizes as
\begin{equation}
q(G^{1:T}\mid G^0)=\prod_{t=1}^T q(G^t\mid G^{t-1})
\label{eq:graph_markov_chain}
\end{equation}
The transition matrix $\mathbf{Q}_E^t\in\mathbb{R}^{b\times b}$ governs one-step edge corruption,
with entries $[\mathbf{Q}_E^t]_{rs}=q(e^t=s\mid e^{t-1}=r)$, and
$\bar{\mathbf{Q}}_E^t=\mathbf{Q}_E^1\cdots\mathbf{Q}_E^t$ is the cumulative transition matrix.
Since transitions act independently on each edge, the per-edge forward marginal is the $1\times b$
probability vector $q(e_{ij}^t \mid e_{ij}^0) = \mathbf{e}_{ij}^0\,\bar{\mathbf{Q}}_E^t$. The
joint forward marginal over all edges is
\begin{equation}
q(\mathbf{E}^t \mid \mathbf{E}^0) = \prod_{1 \leq i,j \leq n} q(e_{ij}^t \mid e_{ij}^0)
\label{eq:digress_forward_compact}
\end{equation}
For undirected graphs, noise is applied only to the upper-triangular part of $\mathbf{E}$ and
then symmetrized.
A denoising network $\phi_\theta$ takes a noisy graph $G^t=(\mathbf{1},\mathbf{E}^t)$ and
timestep $t$ as input, and outputs for each node pair $(i,j)$ a probability vector
$\hat{\mathbf{p}}_{ij}^{E,t}\in\mathbb{R}^b$ over edge types. It is optimized via cross-entropy
against the clean edge states:
\begin{equation}
\mathcal{L}_{\mathrm{CE}}(\hat{\mathbf{p}}^{E,t},G^0)
=
\sum_{1\le i,j\le n}\mathrm{CE}\!\left(e^0_{ij},\hat{\mathbf{p}}_{ij}^{E,t}\right)
\label{eq:digress_ce_edges}
\end{equation}
New graphs are sampled by running the reverse process from $G^T\sim\pi_T$. Since node features
remain fixed, the reverse distribution factorizes over edges:
\begin{equation}
p_\theta(\mathbf{E}^{t-1}\mid G^t)=\prod_{1\le i,j\le n} p_\theta(e_{ij}^{t-1}\mid G^t)
\label{eq:reverse_factorization}
\end{equation}
Each factor is computed by marginalizing over the predicted clean edge distribution:
\begin{equation}
p_\theta(e_{ij}^{t-1}\mid G^t)
=
\sum_{r\in\mathcal{B}}
q\!\left(e_{ij}^{t-1}\mid e_{ij}^t,e^0_{ij}=r\right)\hat{\mathbf{p}}_{ij}^{E,t}(r)
\label{eq:reverse_marginal}
\end{equation}
where $q(e_{ij}^{t-1}\mid e_{ij}^t, e_{ij}=r)$ is the forward posterior, available in closed
form in \citep{vignac2023digressdiscretedenoisingdiffusion} and $\hat{\mathbf{p}}_{ij}^{E,t}(r)$ denotes the network's estimate of the probability that the clean edge takes value $r$ given the noisy graph $G^t$. As we will see later, by Theorem~\ref{thm:concentration_ER},
when $G^t$ is an E-R random graph, $\hat{\mathbf{p}}_{ij}^{E,t}$ is approximately constant across all
pairs $(i,j)$, so $p_\theta(e_{ij}^{t-1}\mid G^t)$ is the same for all pairs and
$\mathbf{E}^{t-1}$ again follows an E-R distribution.

Transition matrices are chosen as in \citep{vignac2023digressdiscretedenoisingdiffusion} to
preserve the empirical marginal distribution of edge types $\mathbf{m}_E\in\mathbb{R}^b$
throughout diffusion through 
$\mathbf{Q}_E^t =
\alpha_t I + \beta_t \mathbf{1}_b \mathbf{m}_E^\top$ 
where, $\beta_t = 1-\alpha_t$.
With probability $\alpha_t$ the edge state is preserved, and with probability $\beta_t$ it is
resampled from $\mathbf{m}_E$. 

As $t\to T$, every edge is driven independently towards
$\mathbf{m}_E$, so the terminal prior $\pi_T := q(G^T)$ is a product of independent marginals,
an Erd\H{o}s--R\'enyi random graph where every edge is drawn independently with probability
equal to the empirical edge density of the training set.

\paragraph{Exchangeability.} An important feature of any graph generation method is that it
needs to be \emph{exchangeable}: all permutations of the generated graphs need to be
equiprobable. In other words, if we denote by $\sigma \cdot G$ the relabelling of a graph $G$
by a permutation $\sigma$, our entire generation process needs to satisfy
\begin{equation*}
    p_\theta(\sigma \cdot G^0) = p_\theta(G^0)
\end{equation*}
for all $\sigma, G^0$. Fortunately, if a) the prior distribution $\pi_T = q(G^T)$ is
exchangeable (which is naturally the case of E-R random graphs), and b) the denoiser
$\phi_\theta$ is \emph{permutation equivariant}\footnote{that is,
$\phi_\theta(\sigma \cdot G) = \sigma \cdot \phi_\theta(G)$}, then it is easy to see that the
final distribution is exchangeable: we have
$p_\theta(\sigma \cdot G^{t-1}\mid \sigma \cdot G^t) = p_\theta(G^{t-1}\mid G^t)$ by
equivariance, and $q(\sigma \cdot G^T) = q(G^T)$ by exchangeability of the prior. 

\section{Issue with \ER{} prior}
In this section, we show that the E-R prior introduces a fundamental limitation in the generation process: in the large-graph limit, E-R random graphs are approximate \emph{fixed points} for \emph{any} denoiser. That is, as the graph size grows, denoising an E-R graph approximately yields another E-R graph, making it increasingly difficult to escape this regime. We begin by asking whether a different prior could have been chosen within the discrete diffusion framework.

\paragraph{E-R prior is unavoidable.}
As we have seen in the previous section, the prior distribution $\pi_T$ needs to be exchangeable to guarantee exchangeability of the whole generation process. Moreover, the discrete diffusion process is based on successive \emph{independent} edge perturbations, such that the terminal distribution has fully independent edges. We then have the following result, shown in Appendix \ref{app:prooflemER} for completeness.
\begin{lemma}\label{lem:ER} \citep{ALDOUS1981581,hoover1979relations}
    Any \textbf{exchangeable} random graph model with \textbf{independent edges} is an E-R model.
\end{lemma}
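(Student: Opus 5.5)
The plan is to work with a random graph $G$ on the vertex set $\{1,\dots,n\}$, encoded by its symmetric edge indicators $(e_{ij})_{i<j}$. We take $n$ fixed and use the setting of the paper, where exchangeability means invariance in law under relabelling of the nodes. Independence of edges says that the joint law factorizes:
\[
\mathbb{P}(\mathbf{E}=\mathbf{E}')=\prod_{i<j} p_{ij}^{e'_{ij}}(1-p_{ij})^{1-e'_{ij}},
\qquad p_{ij}:=\mathbb{P}(e_{ij}=1).
\]
So the law is completely determined by the vector of marginal probabilities $(p_{ij})_{i<j}$. To show that $G$ is Erd\H{o}s--R\'enyi, it is enough to show that all $p_{ij}$ are equal to a common value $p$. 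The product formula then reduces to $p^{|E'|}(1-p)^{\binom{n}{2}-|E'|}$, which is exactly the E-R law with parameter $p$.

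To prove that the marginals are equal, we take two pairs $\{i,j\}$ and $\{k,l\}$ with $i\neq j$ and $k\neq l$. We choose a permutation $\sigma$ of $\{1,\dots,n\}$ with $\sigma(i)=k$ and $\sigma(j)=l$; such a $\sigma$ exists because the symmetric group acts transitively on ordered pairs of distinct elements. Exchangeability gives $\sigma\cdot G\overset{d}{=}G$. The edge indicator of $\sigma\cdot G$ at $\{k,l\}$ is $e_{\sigma^{-1}(k)\sigma^{-1}(l)}=e_{ij}$. Comparing the marginals at $\{k,l\}$ on both sides therefore yields $p_{ij}=p_{kl}$.

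The only points needing care are conventions. First, we must check which direction the relabelling $\sigma\cdot G$ acts in, but either direction works, since we may replace $\sigma$ by $\sigma^{-1}$. Second, there is the treatment of self-loops: diagonal entries are absent under the upper-triangular convention of Section~\ref{ddframework}. If diagonal entries were allowed, the same argument would show that all $p_{ii}$ are equal, giving a separate common loop probability.

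The references \citep{ALDOUS1981581,hoover1979relations} concern infinite jointly exchangeable arrays, where a graphon representation $W$ is available. There we would add a remark: independence forces the random graphon to be deterministic, and pairwise independence of edges that share a vertex forces $W$ to be a.e. constant. I would present this only as a remark, since the finite-$n$ argument above is elementary and suffices for the paper's use. The main conceptual step, and the only non-routine one, is recognizing that independence reduces the law to its one-edge marginals, after which transitivity of the permutation action does the rest.
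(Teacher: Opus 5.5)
Your argument is correct and is essentially the paper's proof: the paper sets $P=\mathbb{E}(A)$, notes that exchangeability gives $\sigma P\sigma^\top=P$ for every permutation matrix, and uses the fact that such matrices have constant off-diagonal entries, which is exactly your transitivity-on-ordered-pairs step; your explicit observation that independence reduces the law to its one-edge marginals is the step the paper leaves implicit in its last line. One correction to your optional graphon remark: pairwise independence of edges sharing a vertex only forces the degree function $x\mapsto\int W(x,y)\,dy$ to be a.e.\ constant (the two-block graphon equal to $1$ on same-half pairs and $0$ otherwise has every pair of edges independent but is not constant), so to conclude $W\equiv p$ you need joint independence of larger edge sets, e.g.\ the four edges of a $4$-cycle, which gives $t(C_4,W)=p^4$.
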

In other words, as long as discrete diffusion operates with \emph{independent edges}, the terminal distribution $\pi_T$ is \emph{necessarily} an E-R random graph.

\paragraph{E-R is a fixed point for denoisers.}
Consider a simple denoiser that performs link prediction by first computing node representations with a GNN $\hat\vX = \phi_\theta(\vX, \vE)$ (recall that we generally consider $\vX=1$ in this paper, but the result below is more generally valid for node features $\vX$), then compute edge probabilities using these representations with a similarity function $\varphi$:
$$
\hat{p}_{ij} = \varphi(\vx_i^{(K)}, \vx_j^{(K)}) \in [0,1]
$$
This method is akin to a Variational Graph Auto-Encoder (VGAE) \cite{kingma2022autoencodingvariationalbayes,vgae}, but when the GNN is a GAT \citep{Velickovic:2018we}, this is close to the graph transformer \citep{dwivedi2021generalizationtransformernetworksgraphs} used in DiGress \cite{vignac2023digressdiscretedenoisingdiffusion}, removing some normalization layers and edge modulation for simplicity
\footnote{Technically, in vanilla graph transformers the edge probabilities would also be modulated by the input graph $\hat{p}_{ij} = \varphi(\vx_i^{(K)}, \vx_j^{(K)}, e_{ij})$, in which case our result become: $\hat{\vp}^E$ is approximately an \emph{affine function} of $\vE$ instead of being approximately constant. That is: the output graph is a probabilistic ``sum'' of the input E-R graph and another E-R graph. We choose our version for simplicity.}.
Denote by
\begin{equation}
    \mathsmaller{\std(\hat p) = \sqrt{\tfrac{1}{n^2}\sum_{ij}\left|\hat{p}_{ij} - \tfrac{1}{n^2}\sum_{ij} \hat{p}_{ij}\right|^2},\quad \std(\vX) = \sqrt{\tfrac{1}{n}\sum_i \norm{\vX_i - \tfrac{1}{n}\sum_i \vX_i}^2}}
\end{equation}
the standard deviation of the edge probabilities and input node features, respectively, such that E-R random graphs have exactly $\std(\hat p)=0$. We have the following result.
\begin{theorem}\label{thm:concentration_ER}
    Consider an input graph $\vE$ of size $n$, drawn from an \ER{} model with parameter $p$. Then, for the GNNs outlined below, there is $C$ such that: for all $\delta, \epsilon>0$, if $n$ is large enough, then with probability at least $1-\delta$,
    $$
        \std(\hat p) \lesssim C\std(\vX) + \epsilon
    $$
    where the multiplicative constant only depends on the parameters of the GNN. We have:
    \begin{itemize}
        \item if the GNN is a GAT \citep{Velickovic:2018we} with $K$ layers, then $C = (1/p)^{K}$, and $n$ is such that
        \begin{equation}
            p n \gtrsim \log(n/\delta)
        \end{equation}
        In particular, one can choose $\epsilon=0$ in this case.
        \item if the GNN is a GCN \citep{Kipf:2017tc}, then $C = O(1)$, and $n$ is such that
        \begin{equation}
            p n \gtrsim \epsilon^{-2}\log(n/\delta)
        \end{equation}
        \item if the GNN is GIN \citep{xu2018how}, for all $r>0$ there exists $C_r$ such that the result is valid with $C = O(1)$, and $n$ such that
        \begin{equation}
            p n \gtrsim \epsilon^{-2}\log(n/\delta) \quad \text{and}\quad n\geq C_r\delta^{-r}
        \end{equation}
    \end{itemize}
\end{theorem}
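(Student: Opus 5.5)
The strategy is to propagate a ``spread'' quantity layer by layer. Write $\vX^{(k)}$ for the node representations after layer $k$ and $\std(\vX^{(k)})$ for their spread as defined above. First, $\hat p_{ij}=\varphi(\vx_i^{(K)},\vx_j^{(K)})$ with $\varphi$ Lipschitz. Hence $\hat p_{ij}$ differs from $\varphi(\bar\vx^{(K)},\bar\vx^{(K)})$ by at most $L_\varphi(\norm{\vx_i^{(K)}-\bar\vx^{(K)}}+\norm{\vx_j^{(K)}-\bar\vx^{(K)}})$, where $\bar\vx^{(K)}$ is the mean representation. Averaging the square over $(i,j)$ gives $\std(\hat p)\lesssim \std(\vX^{(K)})$. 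It therefore suffices to show a one-layer recursion
\begin{equation*}
\std(\vX^{(k+1)})\le c\,\std(\vX^{(k)})+\epsilon_k
\end{equation*}
with high probability, and then union bound over the $K$ layers. For each layer, the approach splits the aggregation into a ``mean-field'' part and a deviation part: replace $\vx_j^{(k)}$ by $\bar\vx^{(k)}+(\vx_j^{(k)}-\bar\vx^{(k)})$. The mean part contributes the same vector to every node, up to fluctuations of normalized degrees. Lipschitz pointwise nonlinearities and linear maps then preserve the bound, since $\std(\sigma(\vX W))\le L\norm{W}\std(\vX)$.

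\textbf{GAT.} Attention weights $\alpha_{ij}\ge 0$ are supported on neighbors and sum to one. The aggregate of the mean part is then exactly $\bar\vx W$ for every node, with no error term; this is why $\epsilon=0$ is allowed here. For the deviation part, attention logits are bounded (for bounded features and weights), so $\alpha_{ij}\le c'/d_i$. By Bernstein and a union bound, $pn\gtrsim\log(n/\delta)$ gives $d_i\ge pn/2$ for all $i$. Then
\begin{equation*}
\norm{\textstyle\sum_j\alpha_{ij}(\vx_j-\bar\vx)}^2\le \frac{c'}{d_i}\sum_{j}\norm{\vx_j-\bar\vx}^2\lesssim \frac{1}{pn}\cdot n\,\std(\vX)^2
\end{equation*}
by Jensen over the neighborhood. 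This yields a factor of order $1/p$ per layer and hence $C=(1/p)^K$. I would be careful that the boundedness of features needed for the logit bound propagates through the layers; inputs are bounded, and Lipschitz maps keep them bounded.

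\textbf{GCN.} The propagation operator is $D^{-1/2}AD^{-1/2}$ or $D^{-1}A$. Write it as its expectation $\frac{1}{n}\mathbf{1}\mathbf{1}^\top$ plus a fluctuation. Spectral concentration for E-R graphs with $pn\gtrsim\log(n/\delta)$ gives a fluctuation of operator norm $O(1/\sqrt{pn})$, and degree concentration controls the normalization. The deviation part is bounded through $\norm{D^{-1}A}\le 1$, giving $C=O(1)$. The mean part's fluctuation across nodes is $O(\norm{\bar\vx}/\sqrt{pn})$, which is $\le\epsilon$ under $pn\gtrsim\epsilon^{-2}\log(n/\delta)$.

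\textbf{GIN.} Sum aggregation gives $\vx_i+\sum_{j\sim i}\vx_j=\vx_i+d_i\bar\vx+\text{deviation}$, followed by an MLP. The difficulty is that the mean part scales with $d_i$, which is of order $pn$, so the MLP must be applied at inputs of diverging scale. Degree fluctuations are of order $\sqrt{pn\log n}$, so they do not vanish unless rescaled.

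I expect the GIN case to be the main obstacle. My first attempt would use a Lipschitz MLP with bounded activations, or equivalently normalization by $n$, and bound $|d_i-pn|\le\epsilon pn$ uniformly. The second condition $n\ge C_r\delta^{-r}$ suggests a Markov or moment-based argument instead of union bounds. Concretely, I would control the empirical fraction of nodes whose degree deviates, using an $r$-th moment bound, rather than requiring all nodes to concentrate. The remaining parts are routine concentration and Lipschitz bookkeeping.
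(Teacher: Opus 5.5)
Your overall reduction matches the paper's. The Lipschitz link gives $\std(\hat p)\le 2L_\varphi\,\std(\vX^{(K)})$. Each layer then splits into a deviation part $S_k(\vX^{(k-1)}-1_n\bar\vx^\top)\theta_1^{(k)}$, controlled by $\norm{S_k}_2$, and a mean part $S_k1_n\bar\vx^\top\theta_1^{(k)}$, controlled by how far the row sums of $S_k$ are from a constant.

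For GAT you follow the paper: unit row sums give zero mean-part error, hence $\epsilon=0$. Your row-wise Jensen bound replaces the paper's $\norm{S_k}_2\le\norm{S_k}_F$. It actually gives a factor $p^{-1/2}$ per layer, which is within the stated $(1/p)^K$.

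For GCN, one correction: $\norm{D^{-1}A}_2\le 1$ is false in general. The random-walk matrix is not symmetric, and a star graph gives norm $\sqrt{n-1}$. Under degree concentration you do get $\norm{D^{-1}A}_2\le\max_i d_i/\min_i d_i\le 3$. The paper instead uses the symmetric $\tilde D^{-1/2}\tilde A\tilde D^{-1/2}$, for which $\norm{S}_2\le 1$ holds directly. It handles the mean part by uniform degree concentration of the row sums, so no spectral concentration is needed there.

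The genuine gap is GIN, which you leave open, and your plan targets the wrong term. The paper fixes the scale by taking $S=A/(pn)$; for fixed $n$ this is absorbed into $\theta_1^{(k)}$, so it is still vanilla GIN. Bounded activations are not a substitute: the Lipschitz bookkeeping $\std(\sigma(Y))\le\std(Y)$ would still see a pre-activation spread that grows with $pn$. With this normalization the mean part has row sums $d_i/(pn)$. Your ``first attempt'' (uniform $|d_i-pn|\le\epsilon pn$ by a union bound, under $pn\gtrsim\epsilon^{-2}\log(n/\delta)$) is exactly the paper's step.

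What you never supply is the other half: a high-probability bound $\norm{A/(pn)}_2=O(1)$. This is needed for two reasons. It keeps the deviation factor $\norm{\theta_0^{(k)}}_2+\norm{S}_2\norm{\theta_1^{(k)}}_2$ bounded, which is the claim $C=O(1)$. It also keeps $\normFn{\vX^{(k)}}$, and hence $\norm{\bar\vx^{(k)}}$ in the mean-part error, bounded across layers. The paper gets it from Lei--Rinaldo spectral concentration, $\norm{A/(pn)-1_n1_n^\top/n}_2\lesssim 1/\sqrt{pn}$, whose failure probability is polynomially small in $n$. That is the sole origin of $n\ge C_r\delta^{-r}$.

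An $r$-th moment bound on the fraction of nodes with deviating degree would only address the mean part, which is already handled. It says nothing about the operator norm. In fact, since $A$ is symmetric and nonnegative, $\norm{A}_2\le\max_i d_i$. So the same degree-concentration event already gives $\norm{A/(pn)}_2\le 3/2$, and the extra condition on $n$ can be dropped altogether.
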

The details for this theorem and precise description of the considered GNNs are given in Appendix \ref{thmproofs}. In short, we thus have the following: when the node features have low standard deviation, the denoiser outputs approximately constant edge probabilities. Constant features are an extreme version of that, since $\std(\mathbf{1})=0$. In other words, the class of E-R random graphs represents an approximate \emph{fixed-point} for denoisers, and
as a result node heterogeneity remains low during the whole diffusion process (Figure~\ref{fig:propexpts}), as all graphs will be almost E-R-like.

\paragraph{Applicability of Theorem 4.2 on Graph Transformer.} We prove Theorem 4.2 on a simplified link predictor in  Appendix \ref{thmproofs} while in practice DiGress uses a complex Graph Transformer with its edge predictions modulated by FiLM layers \citep{vignac2023digressdiscretedenoisingdiffusion}. We argue this does not change the conclusion as this setup provides with a tractable model to study behavior. Further, the edge modulation in the DiGress transformer is applied per pair: each pair's modulation is computed from its own edge feature and from the global (timestep) vector, so it does not introduce dependence between different pairs. With constant node features, the predicted edge logits depend on each pair only through its own current state and a global summary of the graph, leading to E-R-like graphs and preventing node heterogeneity from emerging. Empirically, this is what we observe on the model used in practice: Fig.~\ref{fig:propexpts} shows the node propensity variance of the actual DiGress denoiser, with edge modulation, collapsing toward zero at high noise, while the LSP-based model retains heterogeneity throughout.

\paragraph{Structural node features.} The above issue has often been interpreted as a lack of ``expressivity'' of GNNs, hampering their denoising abilities. To fix this, many works add structural node features computed {\emph{a posteriori}} at each denoising step, such as degrees, cycle counts, and so on. While such node features indeed counteract the incapacity of GNNs to compute these quantities, they only offer limited fix to the E-R problem, as shown in the following Lemma proved in Appendix \ref{app:proofstatsconstant}.

\begin{lemma} \label{statsconstantlemma}
    Consider a dense E-R random graph with parameter $p = O(1)$ for simplicity. Consider a template graph $H$ of size $r$, and denote by $C_i$ the number of subgraphs equal to $H$ containing node $i$. Then $C = \mathbb{E}(C_i)$ does not depend on $i$, and with probability at least $1-\delta$,
    $$
    \std(C_i/C) \lesssim \sqrt{\log(n/\delta)/n}
    $$
\end{lemma}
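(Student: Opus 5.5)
The approach is concentration of each $C_i$ around its mean, followed by a union bound over nodes. First, exchangeability of the E-R model (Lemma~\ref{lem:ER}) shows that the law of $C_i$ is the same for every $i$, so $C=\mathbb{E}(C_i)$ does not depend on $i$. Explicitly, $C=\binom{n-1}{r-1}\,a_H\,p^{|E(H)|}$, where $a_H$ is the number of labelled copies of $H$ on a fixed vertex set of size $r$ containing $i$, counted up to automorphisms. For fixed $r$ and $p=O(1)$ this gives $C\asymp n^{r-1}$, with constants depending only on $H$ and $p$.

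Next, fix $i$ and view $C_i$ as a function of the $N=\binom{n}{2}$ independent edge indicators. I would use a bounded-differences argument, but applied to vertex exposure, since edge exposure gives a suboptimal bound. Expose, for each $v\neq i$, the block of edges between $v$ and the vertices preceding it (with $i$ placed first). This gives $n-1$ independent blocks. Changing a single block alters only subgraphs that contain both $i$ and $v$, and there are at most $c_H n^{r-2}$ of these. McDiarmid's inequality then yields
$$
\mathbb{P}\bigl(|C_i-C|\ge t\bigr)\le 2\exp\!\Bigl(-\tfrac{2t^2}{(n-1)c_H^2 n^{2(r-2)}}\Bigr).
$$
Choosing $t=c_H n^{r-3/2}\sqrt{\log(2n/\delta)}$ makes the right-hand side at most $\delta/n$. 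Dividing by $C\asymp n^{r-1}$ gives $|C_i/C-1|\lesssim\sqrt{\log(n/\delta)/n}$.

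A union bound over the $n$ nodes makes this hold simultaneously for all $i$ with probability at least $1-\delta$. To finish, note that $\std(C_i/C)$ is the root-mean-square deviation of $C_i/C$ from its empirical mean $\bar m$. It is therefore at most the root-mean-square deviation from $1$, because the mean minimises squared deviation. That quantity is in turn at most $\max_i|C_i/C-1|$, which gives the claim.

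The main obstacle is the Lipschitz constant. A naive edge-level bounded-differences bound has per-coordinate change of order $n^{r-2}$ over $\sim n^2$ coordinates. That gives fluctuations of order $n^{r-1}$, the same order as $C$, which is useless. Grouping the edges into vertex blocks is the key choice, and it needs care with $i$ itself. Changing the edges between $i$ and $v$ belongs to the block of $v$, and that is still covered by the $n^{r-2}$ count, since every affected copy contains both $i$ and $v$. If this bookkeeping fails, I would fall back to a second-moment and Janson or Kim--Vu polynomial concentration bound, which gives the same rate up to logarithmic factors.
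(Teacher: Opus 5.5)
Your proof is correct and gives the same rate. However, the reason you give for switching to vertex exposure is mistaken. The paper uses exactly the edge-level McDiarmid bound you dismiss, and it works.

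For the rooted count $C_i$, flipping an edge $e_{k\ell}$ with $k,\ell\neq i$ can only affect copies whose vertex set contains $i$, $k$ and $\ell$. Its bounded-difference constant is therefore $O(n^{r-3})$, not $O(n^{r-2})$. Only the $n-1$ edges incident to $i$ have constant $O(n^{r-2})$. McDiarmid allows coordinate-dependent constants, so $\sum_{k<\ell}c_{k\ell}^2 = O(n^2\cdot n^{2r-6}) + O(n\cdot n^{2r-4}) = O(n^{2r-3})$. After normalising by $C\asymp n^{r-1}$ this is $O(1/n)$, which is precisely the paper's two-case computation.

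Your vertex-block exposure reaches the same $\sum_v c_v^2\asymp n\cdot n^{2r-4}$ with one uniform constant per block. It replaces the paper's case split (edges incident to $i$ versus not) with the single observation that every copy affected by block $B_v$ contains both $i$ and $v$. That is slightly cleaner bookkeeping but buys nothing in the rate, and your fallback to Janson or Kim--Vu is unnecessary.

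The remaining steps match the paper: $C$ is independent of $i$, $C\asymp n^{r-1}$, the union bound over $i$, and $\std(C_i/C)\le\max_i|C_i/C-1|$ because the empirical mean minimises squared deviation. Two minor remarks:
\begin{itemize}
\item Exchangeability of the E-R model is immediate. It does not follow from Lemma~\ref{lem:ER}, which goes in the converse direction.
\item The paper counts induced copies, $1_{G_{\{i\}\cup\mathcal{S}}\simeq H}$, so its $C$ carries an extra factor $(1-p)^{r(r-1)/2-m_H}$ that your $p^{|E(H)|}$ omits. This is harmless for the rate provided $p$ stays bounded away from $0$ and $1$, which both arguments implicitly need to get $C\gtrsim n^{r-1}$.
\end{itemize}
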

Hence, in the limit, this does not solve the problem, as the standard deviation of such features also vanishes. This is shown in Figure~\ref{fig:lemma4.3} in the Appendix.

\section{Latent variable guided discrete diffusion}
\label{sec:jointdiffusion}
Lemma \ref{lem:ER} establishes that the E-R prior is not merely a sub-optimal
choice but an unavoidable consequence of edge independence and exchangeability, and Theorem \ref{thm:concentration_ER} and Lemma~\ref{statsconstantlemma} state that E-R graphs become difficult to denoise in the large graph limit, even with
structural features computed \emph{a posteriori}. 
To resolve this, we propose a \emph{new prior} that induces node-level heterogeneity directly through latent variables, along with the corresponding \emph{joint} diffusion process on latent variables and discrete graph structure. 

\begin{figure}[h]
    \centering
    \includegraphics[width=0.6\linewidth]{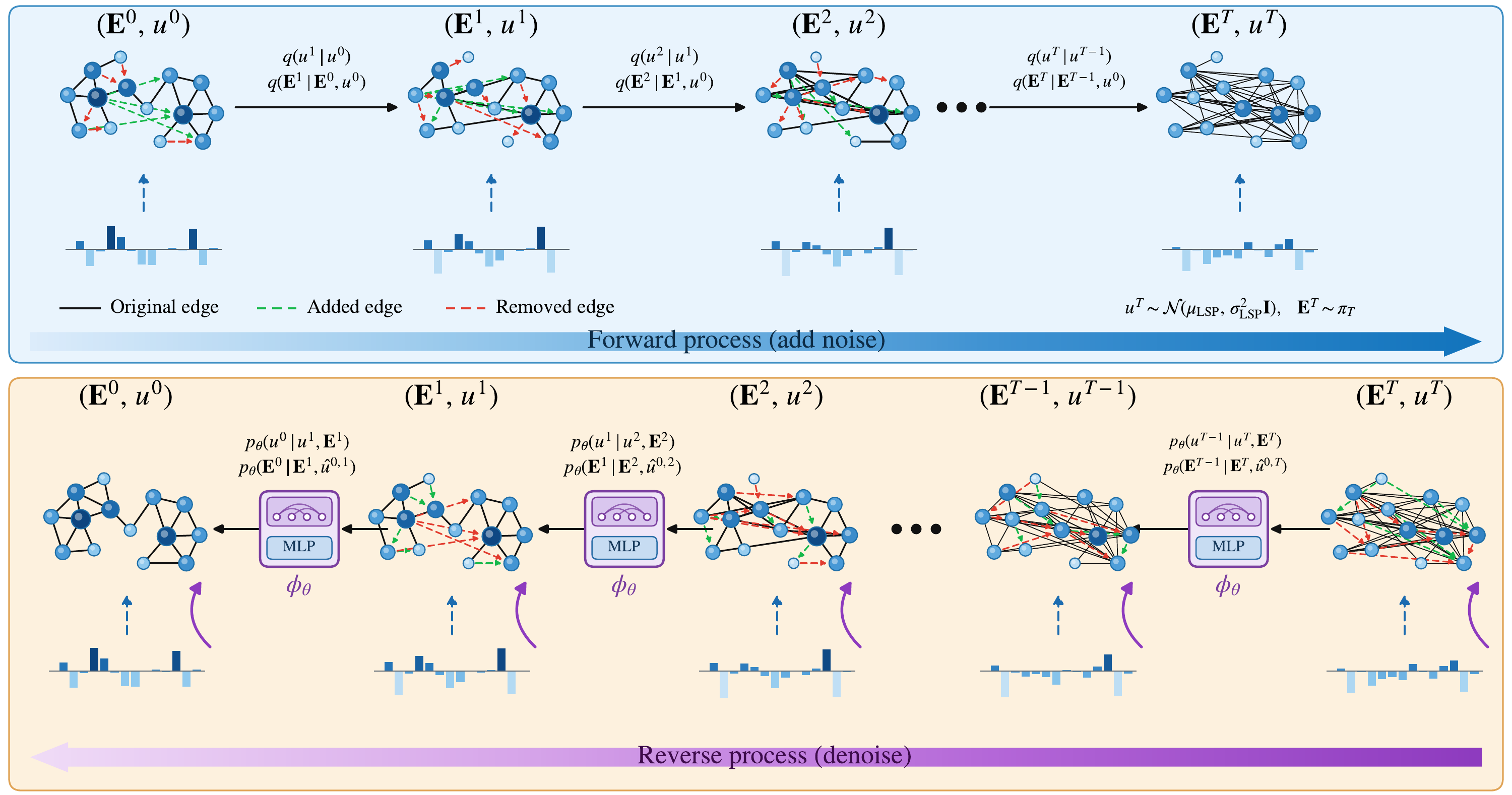}
\caption{\textbf{Joint diffusion over edges and latent sociability.}
\emph{Top:} the latent sociability $u^t$ and edge states $\mathbf{E}^t$ are jointly noised, with edge transitions conditioned on $u^0$, until reaching $u^T \sim \mathcal{N}(\mu_{\mathrm{LSP}},\sigma_{\mathrm{LSP}}^2)$ and $\mathbf{E}^T \sim \pi_T$.
\emph{Bottom:} during denoising, the graph transformer $\phi_\theta$ predicts the less noisy latent $\hat{u}^{t-1}$, which conditions the reverse edge transition $p_\theta(\mathbf{E}^{t-1}\mid \mathbf{E}^t,\hat{u}^{t-1})$.}
\label{fig:joint_diffusion}
    \label{fig:enter-label}
\end{figure}

\subsection{Latent Sociability Prior (LSP)}
Unlike existing latent graph diffusion approaches \cite{siraudin2026principled}, which encode the entire graph structure into high-dimensional latent space and may require complex autoencoders, the role of the latent variables here is
strictly to enforce node heterogeneity along the discrete diffusion process. This allows our choice of latent model to be extremely simple. 
\begin{definition}[Latent Sociability Distribution \citep{Norros_Reittu_2006}]
\label{def:lsp}
Let \(w_1,\dots,w_n \overset{i.i.d.}{\sim} p_W\) be latent sociability weights, where \(p_W\) is a
distribution on \(\mathbb{R}_{>0}\) to be specified in Section~\ref{jointdiffusionwithlsp}. The latent
sociability distribution defines a random undirected graph \(G\) with \emph{conditionally} independent Bernoulli edges as 
\begin{equation}\label{lspequation}
    \mathbb{P}(e_{ij}=1\mid w_i,w_j) = 1-e^{-w_i w_j}, \qquad i<j,
\end{equation}

\end{definition}

We refer to this distribution of random graphs as Latent Sociability Prior (LSP). By construction, the edges are conditionally independent given $w_1,\dots,w_n$ but marginally dependent, while global exchangeability is preserved. 
Intuitively, a node with high weights $w_i$ will tend to be more ``connected'': all the $p_{ij}$ will be higher.
Hence, a distribution $p_W$ with a large variance \emph{guarantees} a diverse profile of connection propensities (see Figure~\ref{fig:propexpts}), and empirically a variety of higher-order structures as well (Figure ~\ref{fig:higherorderstatscounts}), unlike the E-R prior. Integrating this latent-variable prior
into the discrete denoising framework of Section~\ref{ddframework} is not immediately obvious. We develop below a \emph{joint}
diffusion process that combines continuous Gaussian diffusion \citep{ddpm} on the latent weights
with the discrete corruption process on the graph edges.

\subsection{Joint diffusion with LSP} \label{jointdiffusionwithlsp}
A direct diffusion on the weights $w_i$ is not suitable, since Gaussian diffusion would not
preserve the positivity constraint and could produce invalid intermediate states. We therefore
reparameterize the weights as $w_i = \exp(u_i)$ with $u_i \in \mathbb{R}$ unconstrained. Under
a Gaussian prior on $u_i$, the induced prior on $w_i$ is log-normal. After this
reparameterization, the edge probability in Equation \ref{lspequation} becomes
\begin{equation}
p_{ij}=1-e^{-e^{u_i+u_j}}
\label{eq:lsp_link_u}
\end{equation}

\paragraph{Forward process.}
We define a joint forward noising process over the continuous latent vector $\mathbf{u}$ and the
discrete edge tensor $\mathbf{E}$. Given a clean graph $G^0$, one must first construct the
corresponding clean latent vector $\mathbf{u}^0$. While usual latent approaches require
training an encoder, in the LSP $\mathbf{u}^0$
admits a simple, approximate \emph{closed-form} expression.
\begin{lemma}\label{lem:u}
    Consider an LSP random graph. Assuming a sparse regime where $u_i = O(-c_n)$ for some
    constant $c_n \gg 0$, we have for all $i$
    $$
    \mathbb{E}(d_i \mid \mathbf{u}) = C e^{u_i} + O(ne^{-4c_n})
    $$
    for some constant $C = O(ne^{-c_n})$, where $d_i$ denotes the degree of node $i$.
\end{lemma}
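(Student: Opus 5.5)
The plan is a direct computation of the conditional expected degree followed by a second-order Taylor expansion of the link function \eqref{eq:lsp_link_u}. Conditionally on $\mathbf{u}$, the edges are independent Bernoulli variables. Hence
$$
\mathbb{E}(d_i\mid \mathbf{u}) = \sum_{j\neq i} \left(1-e^{-e^{u_i+u_j}}\right).
$$
In the sparse regime $u_i = O(-c_n)$, each exponent $x_{ij} := e^{u_i+u_j}$ is of order $e^{-2c_n}$, which is small. So I would expand $1-e^{-x} = x + R(x)$, with $|R(x)|\le x^2/2$ for $x\ge 0$. This follows from the alternating series, or from $e^{-x}\le 1-x+x^2/2$.

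For the main term, summing the linear part gives
$$
\sum_{j\neq i} e^{u_i+u_j} = e^{u_i}\sum_{j\neq i} e^{u_j}.
$$
I would set $C := \sum_{j} e^{u_j}$, which does not depend on $i$. Since each $e^{u_j} = O(e^{-c_n})$, we get $C = O(n e^{-c_n})$, as claimed. Excluding $j=i$ changes the sum by $e^{2u_i} = O(e^{-2c_n})$. This is absorbed into the error as long as $e^{-2c_n}\lesssim n e^{-4c_n}$, i.e.\ $e^{2c_n}\lesssim n$.

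If that condition is not assumed, there are two options. One is to define $C$ as the full sum and keep the diagonal correction explicitly. The other is to note that the statement's error term should be read as $O(ne^{-4c_n} + e^{-2c_n})$. I expect this bookkeeping to be the only real subtlety, and I would state whichever convention makes the lemma hold.

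For the remainder, $\sum_{j\neq i}|R(x_{ij})| \le \tfrac12\sum_{j} e^{2u_i+2u_j} = O(n e^{-4c_n})$, using $u_i,u_j\le -c_n + O(1)$. Combining the main term and the remainder gives $\mathbb{E}(d_i\mid\mathbf{u}) = C e^{u_i} + O(ne^{-4c_n})$.

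The interpretation I would add afterwards is that $e^{u_i}\approx d_i/C$, with $C \approx \sqrt{\sum_j d_j}$ obtained by summing over $i$. This yields the closed-form estimate of $\mathbf{u}^0$ from observed degrees.

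The main obstacle is making "$u_i = O(-c_n)$" precise. I would interpret it as a uniform upper bound $u_i \le -c_n + O(1)$ for all $i$, so that all the $O(\cdot)$ constants are uniform in $i$ and $j$. The one-sided bound is what the remainder estimate needs. For $C = O(ne^{-c_n})$, the same upper bound suffices as an upper estimate.
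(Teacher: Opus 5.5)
Your proposal is correct and follows the paper's route: expand $1-e^{-x} = x + O(x^2)$ to get $p_{ij} = e^{u_i+u_j} + O(e^{-4c_n})$, sum over $j$, and factor out $e^{u_i}$ with $C = \sum_j e^{u_j}$. Your handling of the diagonal term is more careful than the paper's, which writes $\mathbb{E}(d_i\mid\mathbf{u}) = \sum_j p_{ij}$ over all $j$ and so silently absorbs the $e^{2u_i} = O(e^{-2c_n})$ discrepancy; as you note, that term is only $O(ne^{-4c_n})$ when $e^{2c_n}\lesssim n$.
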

Following this, we set $u_i^0 \approx \log(d_i + \delta) + \mu$ where $\delta>0$ is a small
smoothing constant. To better capture relative degree heterogeneity across different training
graphs, we remove the graph-level mean and obtain
\begin{equation}
u_i^0
=
\mu_{\mathrm{LSP}}
+
\log(d_i+\delta)
-
\frac{1}{|V|}\sum_{k\in V}\log(d_k+\delta)
\label{eq:u0_from_degree}
\end{equation}
where $\mu_{\mathrm{LSP}}$ is the mean of the terminal Gaussian prior. Similarly to enforcing the
marginal edge density in the E-R prior, we calibrate the parameters
$(\mu_{\mathrm{LSP}}, \sigma_{\mathrm{LSP}})$ from the training data, as detailed in
Appendix~\ref{constructlsp}.

We corrupt the latent variables using the shifted and scaled DDPM forward process

\begin{equation}
\mathbf{u}^t = \mu_{\mathrm{LSP}} +
\sqrt{\bar{\alpha}_t}\,(\mathbf{u}^0-\mu_{\mathrm{LSP}}) +
\sigma_{\mathrm{LSP}}\sqrt{1-\bar{\alpha}_t}\,\epsilon,
\qquad
\epsilon\sim\mathcal{N}(0,I)
\label{eq:forward_u_sample}
\end{equation}

so that $\mathbf{u}^t = \mathbf{u}^0$ at $t=0$ and $\mathbf{u}^t$ converges in distribution to
$\mathcal{N}(\mu_{\mathrm{LSP}}, \sigma_{\mathrm{LSP}}^2 I)$ as $t\to T$. The derivation and
reverse posterior are given in Appendix~\ref{appendix:ddpm_shifted}.

For the discrete component, we modify the transition matrix of Section~\ref{ddframework} by
replacing the global stationary $\mathbf{m}_E$ with a \emph{pair-specific} stationary

$\mathbf{p}_{ij}^0 = [\,1-p_{ij}^0,\;p_{ij}^0\,] \in \mathbb{R}^b$, where
$p_{ij}^0 = 1 - e^{-e^{u_i^0 + u_j^0}}$. This gives the pair-specific transition matrices

\begin{equation}
\mathbf{Q}_E^{t,ij}
=
\alpha_t I + \beta_t \mathbf{1}_b (\mathbf{p}_{ij}^0)^\top,
\qquad
\bar{\mathbf{Q}}_E^{t,ij}
=
\bar{\alpha}_t I + (1-\bar{\alpha}_t)\mathbf{1}_b(\mathbf{p}_{ij}^0)^\top
\label{eq:pairwise_transition_matrices}
\end{equation}

with entries $[\mathbf{Q}_E^{t,ij}]_{rs} = q(e_{ij}^t=s\mid e_{ij}^{t-1}=r,\mathbf{u}^0)$.
Since $\mathbf{u}^0$ is a deterministic function of $G^0$, all matrices are fixed given
$(\mathbf{E}^0, \mathbf{u}^0)$, and the joint forward marginal factors over independent edges:

\begin{equation}
q(\mathbf{E}^t \mid \mathbf{E}^0, \mathbf{u}^0)
=
\prod_{1 \leq i,j \leq n} q(e_{ij}^t \mid e_{ij}^0, \mathbf{u}^0),
\qquad
q(e_{ij}^t \mid e_{ij}^0, \mathbf{u}^0) = e_{ij}^0\,\bar{\mathbf{Q}}_E^{t,ij}
\label{eq:q_Et_joint}
\end{equation}

This has the same closed-form structure as Equation \ref{eq:digress_forward_compact}, with
pair-specific stationaries, so the forward posteriors needed for reverse sampling remain
tractable \citep{vignac2023digressdiscretedenoisingdiffusion}. Crucially, parameterizing the
transitions by $\mathbf{u}^0$ rather than $\mathbf{u}^t$ is what makes this possible: if the
transitions depended on $\mathbf{u}^t$, the matrices would be random variables and the product
structure in Equation \ref{eq:q_Et_joint} would no longer hold.

\paragraph{Reverse process.}
Starting from $(\mathbf{u}^t, \mathbf{E}^t)$, the network $\phi_\theta$ predicts the clean
latent vector $\hat{\mathbf{u}}^{0,t} = \phi_\theta(\mathbf{u}^t, \mathbf{E}^t, t)$ and updates
both components separately. The reverse kernel factorizes as

\begin{equation}
p_\theta(\mathbf{u}^{t-1}, \mathbf{E}^{t-1} \mid \mathbf{u}^t, \mathbf{E}^t)
=
p_\theta(\mathbf{u}^{t-1} \mid \mathbf{u}^t, \mathbf{E}^t)\,
p_\theta(\mathbf{E}^{t-1} \mid \hat{\mathbf{u}}^{0,t}, \mathbf{E}^t)
\label{eq:joint_reverse}
\end{equation}

For the continuous component, the reverse update follows the DDPM posterior in the
shifted-and-scaled parameterization derived in Appendix~\ref{appendix:ddpm_shifted}. For the
discrete component, $\hat{\mathbf{u}}^{0,t}$ determines pair-specific reverse transition matrices

\begin{equation}
\hat{\mathbf{Q}}_E^{t,ij}
=
\alpha_t I + \beta_t \mathbf{1}_b (\hat{\mathbf{p}}_{ij}^{0,t})^\top,
\qquad
\hat{\bar{\mathbf{Q}}}_E^{t,ij}
=
\bar{\alpha}_t I + (1-\bar{\alpha}_t)\mathbf{1}_b(\hat{\mathbf{p}}_{ij}^{0,t})^\top
\label{eq:reverse_Q_matrices}
\end{equation}

where $\hat{p}_{ij}^{0,t} = 1 - e^{-e^{\hat{u}_i^{0,t}+\hat{u}_j^{0,t}}}$ and
$\hat{\mathbf{p}}_{ij}^{0,t} = [\,1-\hat{p}_{ij}^{0,t},\;\hat{p}_{ij}^{0,t}\,]$. The reverse
edge distribution factorizes as in Equation \ref{eq:reverse_factorization}, with each factor given by
\begin{equation}
p_\theta(e_{ij}^{t-1} \mid e_{ij}^t, \hat{\mathbf{u}}^{0,t})
=
\sum_{r\,\in\,\mathcal{B}}
q(e_{ij}^{t-1} \mid e_{ij}^t, e_{ij}=r;\,\hat{\mathbf{u}}^{0,t})\,
\hat{\mathbf{p}}_{ij}^{E,t}(r)
\label{eq:reverse_edge_update}
\end{equation}

where the posterior is computed from the pair-specific matrices in Equation \ref{eq:reverse_Q_matrices}.

\paragraph{Training objective.}
We retain the cross-entropy objective of Equation \ref{eq:digress_ce_edges} for edge prediction and
augment it with an MSE loss on the predicted clean latents:
\begin{equation}
\mathcal{L}
=
\sum_{1 \leq i,j \leq n} \mathrm{CE}(e_{ij},\,\hat{\mathbf{p}}_{ij}^{E,t})
+
\lambda_u \left(\frac{1}{n}\sum_{i=1}^n \|\hat{u}_i^{0,t} - u_i^0\|_2^2\right)
\label{eq:joint_loss}
\end{equation}
where $\lambda_u \in \mathbb{R}_+$ controls the relative weight of the latent regression term. The variational derivation
and its scope are detailed in Appendix~\ref{app:elbo}.

\section{Experiments}
\paragraph{Datasets.} We introduce a new synthetic benchmark by sampling subgraphs from two real-world datasets, Cora \citep{Cora} and Chameleon \citep{wikipediadatasets}, which we refer to as CoraChameleon. Unlike commonly used synthetic benchmarks such as SBM or planar graphs, this better reflects the structural properties of real-world graphs; details are given in Appendix~\ref{datasetgen}. We generate $3000$ graphs for $n \in \{10,20,50,100\}$ and $1400$ for $n=500$, across node sizes $n \in \{10,20,50,100,500\}$. We also conduct experiments on the commonly used benchmarks Ego and Community datasets \citep{graphrnn}, with a $60/20/20$ train/test/validation split.
\subsection{JointLSP preserves node heterogeneity at high noise} \label{nodepropsection}
To study whether the proposed joint diffusion framework with LSP preserves node-level
heterogeneity during the diffusion process, we define a node propensity score $\mathcal{S}_i^t$
using the predicted clean edge probabilities at diffusion step $t$. For each node $i$, we compute $\mathcal{S}^t_i = \frac{1}{n-1}\sum_{j \neq i} \hat{\mathbf{p}}_{ij}^{E,t}$, where $\hat{\mathbf{p}}_{ij}^{E,t}$ denotes the
model's predicted clean edge probability between nodes $i$ and $j$ at time $t$. Intuitively, $\mathcal{S}_i^t$ measures the average connectivity propensity of node $i$ at a given noise
level. To quantify how much this varies across nodes, we compute
${V^t} =
{\mathrm{Var}_{i=1,\dots,n}(\mathcal{S}_i^t)}$. 

We evaluate this metric at different diffusion steps $t \in \{500, 50, 0\}$, where $t=500$ corresponds to the maximally noisy state, over training graphs in a batch. For each graph, $V^t$
is computed across all nodes, and the mean and standard deviation are reported over graphs in
the batch. This is shown for CoraChameleon-Node100 in Figure \ref{fig:propexpts}. The results show that the node heterogeneity induced by the JointLSP is preserved even at high noise levels. In contrast, under the original DiGress marginal prior without node features, the predicted edge probabilities become increasingly uniform across node pairs as $t \to T$, causing the node propensity scores to collapse toward a common value. This indicates that, at highly noisy states, the model effectively starts from an ER graph and must rely on auxiliary handcrafted features to recover richer structure during denoising. By contrast, our framework allows the model to retain variation in node propensity throughout the reverse process, providing an intrinsic source of heterogeneity without requiring such external features.

\begin{figure}[h]
    \centering
    \begin{subfigure}{0.46\linewidth}
        \centering
        \includegraphics[width=\linewidth]{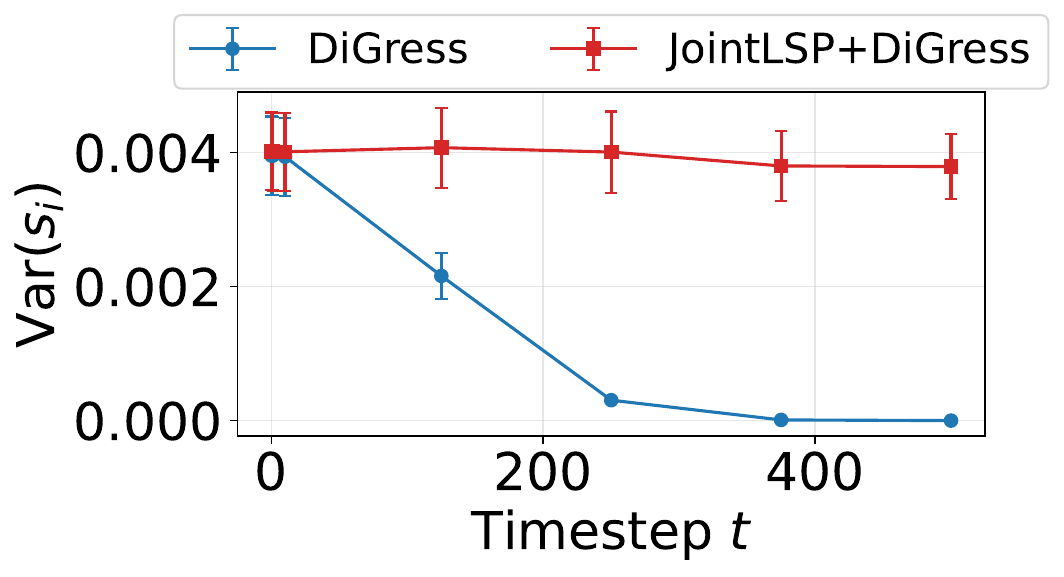}
        \caption{Forward Noising Process.}
          \label{fig:vnormplot}
    \end{subfigure}
    \begin{subfigure}{0.46\linewidth}
        \centering
        \includegraphics[width=\linewidth]{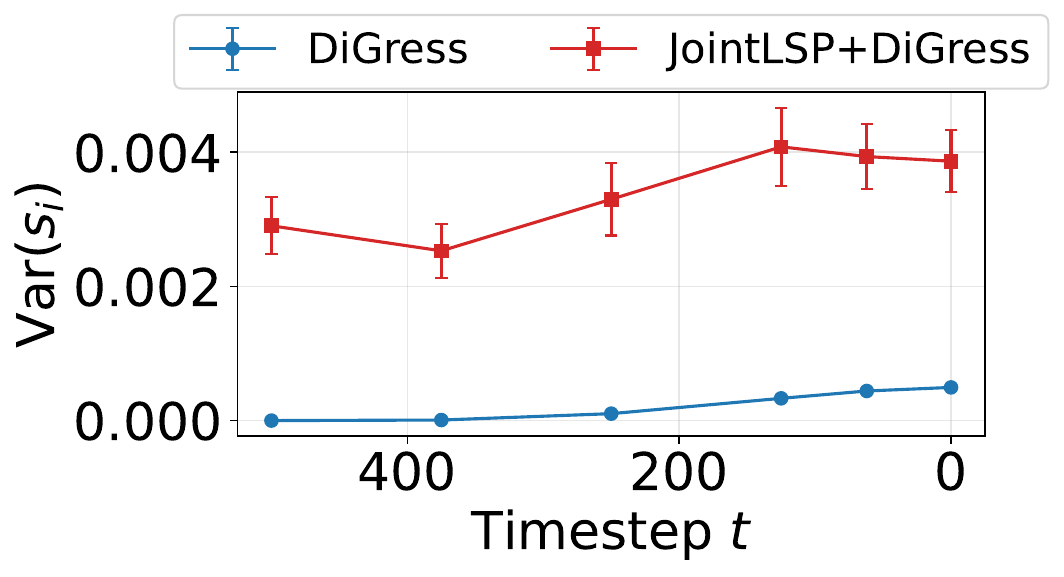}
        \caption{Reverse Denoising Process.}
        \label{fig:vnormreverse}
    \end{subfigure}
    \caption{We compare the variance of node propensity scores for both the original DiGress and JointLSP+DiGress (LSP in the figure). We observe at terminal noisy state, our prior helps maintain a healthy level of node heterogeneity as opposed to DiGress which loses node distinguishability during the diffusion process.}
    \label{fig:propexpts}
\end{figure}

\subsection{Experiments on graph generation} \label{experimentsgraphgeneration}
\paragraph{Evaluation metrics.}
We evaluate generated graphs using degree distribution, clustering coefficient, triangle count, and
degree assortativity. We additionally report the power-law exponent of the degree sequence (PLE)
and characteristic path length (CPL), following \citep{edgedependency} and
\citep{chen2023efficientdegreeguidedgraphgeneration}. For each statistic, we report the MMD ratio $r = \frac{\mathrm{MMD}(\mathrm{generated},\,\mathrm{test})^2}
         {\mathrm{MMD}(\mathrm{train},\,\mathrm{test})^2}$ following \citep{vignac2023digressdiscretedenoisingdiffusion}. Lower is better, with
$r=1$ as the natural baseline.

\paragraph{Ablation with auxiliary features.}
We first establish a DiGress \citep{vignac2023digressdiscretedenoisingdiffusion} baseline using the full set of recommended auxiliary features. We find that the performance of existing graph
diffusion models relies heavily on these additional node and graph-level features rather than on the diffusion process
alone. These features fall into two categories: \emph{structural features} such as $k$-cycle counts, which we address in Lemma \ref{statsconstantlemma}, and \emph{spectral features} such as eigenvectors and eigenvalues computed at every denoising step. We note however that the latter are expensive to compute, and prohibitive at scale. The full list is given in Appendix
Table~\ref{tab:digress_feature_names}. In Table~\ref{tab:digress_feature_ablation_summary} we present results on CoraChameleon-Node100 for three methods: DiGress+$k$ structural features,
by randomly choosing feature subsets of size $k$, averaged over 3 runs, JointLSP+DiGress, which augments DiGress with our joint diffusion framework and LSP but uses \emph{no}
auxiliary features ($k=0$), and finally Digress with the full set of structural and spectral features ($k=17$), which is more computationally intensive. The training details
and hyperparameters are given in Appendix~\ref{trainingdetails}. 

We can observe that structural features are consistently insufficient: even with
$k=10$ features, performance on higher-order statistics such as triangle count and assortativity
remains poor. We also conduct another ablation study in Table \ref{tab:digress_degree}, where we supply degree as both node and graph-level feature and observe similar results. This is consistent with Lemma~\ref{statsconstantlemma}: structural features
computed on a near-ER graph are themselves nearly constant across nodes, and therefore provide
little additional information to the denoiser at high noise levels. JointLSP+DiGress outperforms DiGress with up to $k=10$ features on clustering coefficient, triangle count, and assortativity,
demonstrating that improving the prior directly addresses these limitations. 
Nevertheless, JointLSP+DiGress is still outperformed by the use of \emph{spectral features}, even if much more expensive to compute. Integrating such spectral components into our novel joint diffusion process is a major avenue for future work.

\paragraph{Unconditional graph generation without node features.}
In Table~\ref{tab:mmdtablecombined} we present results for CoraChameleon datasets with
$n \in \{10, 20, 50, 100\}$ comparing DiGress without features against our proposed method.
Our goal is not to claim state-of-the-art performance, but to isolate the no-node-feature
setting in which the standard discrete diffusion pipeline is most constrained. Our method
shows consistent improvements on triangle count and assortativity
\citep{Newman_2003,doi:10.1073/pnas.0912671107}. At $n=100$, DiGress performs better on degree
distribution and PLE, which likely reflects occasional over-correction of the degree distribution under the stronger node heterogeneity induced by LSP. Additional results are reported in the appendix: results for JointLSP+SparseDiff on CoraChameleon across $n \in \{10,20,50,100,500\}$
and on the Ego and Community datasets are given in Appendix \ref{additionalresults}. Finally,
Table ~\ref{tab:digress_degree} examines whether supplying node degree directly as an
auxiliary feature to DiGress can close the gap, showing that while degree features improve some
metrics, they cannot recover the higher-order structural statistics that the LSP improves
confirming that the benefit of our method is not reducible to the degree information used in
constructing $\mathbf{u}^0$. 

\begin{table}[h]
\centering
\caption{Feature subset ablation for DiGress on CoraChameleon-Node100. JointLSP+DiGress ($k=0$, no features) is included for direct comparison. Bold: best result excluding the $k=17$ ceiling.}
\label{tab:digress_feature_ablation_summary}
\resizebox{\linewidth}{!}{%
\begin{tabular}{@{}lcccccc@{}}
\toprule
 & Degree $\downarrow$ & Clustering $\downarrow$ & Triangles $\downarrow$
 & Assortativity $\downarrow$ & PLE $\downarrow$ & CPL $\downarrow$ \\
\midrule
\multicolumn{7}{l}{\textit{DiGress + Structural features}} \\
\quad $k=2$  & $9.32_{\pm 3.08}$   & $5.08_{\pm 2.95}$  & $11.75_{\pm 4.50}$ & $118.18_{\pm 2.68}$  & $7.43_{\pm 2.65}$  & $47.23_{\pm 8.47}$  \\
\quad $k=4$  & $7.01_{\pm 0.53}$   & $9.73_{\pm 7.67}$  & $11.46_{\pm 4.55}$ & $107.68_{\pm 12.37}$ & $10.07_{\pm 5.75}$ & $43.99_{\pm 4.43}$  \\
\quad $k=8$  & $10.88_{\pm 11.42}$ & $5.89_{\pm 4.60}$  & $7.45_{\pm 5.08}$  & $79.81_{\pm 52.72}$  & $7.04_{\pm 5.38}$  & $36.55_{\pm 19.28}$ \\
\quad $k=10$ & $\mathbf{4.25}_{\pm 1.51}$   & $10.72_{\pm 6.46}$ & $10.06_{\pm 6.11}$ & $70.54_{\pm 40.85}$  & $\mathbf{6.99}_{\pm 3.69}$  & $\mathbf{34.47}_{\pm 21.49}$ \\
\midrule
\multicolumn{7}{l}{\textit{JointLSP+DiGress (no features)}} \\
\quad $k=0$ & $10.50_{\pm 0.16}$ & $\mathbf{2.07}_{\pm 0.10}$ & $\mathbf{2.33}_{\pm 0.15}$ & $\mathbf{59.72}_{\pm 0.75}$ & $10.79_{\pm 0.19}$ & $44.99_{\pm 0.29}$ \\
\midrule
\multicolumn{7}{l}{\textit{DiGress + Structural features + Spectral features}} \\
\quad $k=17$ & $0.93_{\pm 0.00}$ & $1.95_{\pm 0.00}$ & $1.68_{\pm 0.00}$ & $3.55_{\pm 0.00}$ & $1.53_{\pm 0.00}$ & $1.92_{\pm 0.00}$ \\
\bottomrule
\end{tabular}%
}
\end{table}

\begin{table}[h]
\centering
\caption{MMD ratios for unconditional graph generation under no-node features.}
\label{tab:mmdtablecombined}
\resizebox{\linewidth}{!}{%
\begin{tabular}{@{}lcccccc@{}}
\toprule
 & Degree $\downarrow$ & Clustering $\downarrow$ & Triangles $\downarrow$
 & Assortativity $\downarrow$ & PLE $\downarrow$ & CPL $\downarrow$ \\
\midrule
\multicolumn{7}{l}{\textit{DiGress (no features)}} \\
\quad $n=10$  & $6.30_{\pm 1.57}$   & $2.46_{\pm 0.34}$  & $15.28_{\pm 7.27}$  & $12.41_{\pm 1.55}$   & $35.40_{\pm 6.41}$          & $12.97_{\pm 4.19}$          \\
\quad $n=20$  & $22.47_{\pm 1.63}$  & $1.44_{\pm 0.00}$  & $58.45_{\pm 9.88}$  & $49.80_{\pm 3.08}$   & $\mathbf{15.84}_{\pm 2.39}$ & $11.46_{\pm 2.22}$          \\
\quad $n=50$  & $21.09_{\pm 0.91}$  & $1.51_{\pm 0.01}$  & $16.90_{\pm 0.23}$  & $70.67_{\pm 1.95}$   & $18.57_{\pm 1.07}$          & $55.49_{\pm 3.01}$          \\
\quad $n=100$ & $\mathbf{4.53}_{\pm 0.11}$   & $11.33_{\pm 0.27}$ & $14.82_{\pm 0.12}$  & $101.95_{\pm 2.13}$  & $\mathbf{4.64}_{\pm 0.17}$  & $\mathbf{39.68}_{\pm 1.14}$ \\
\midrule
\multicolumn{7}{l}{\textit{JointLSP+DiGress (no features)}} \\
\quad $n=10$  & $\mathbf{5.19}_{\pm 1.61}$  & $\mathbf{1.79}_{\pm 0.06}$ & $\mathbf{13.45}_{\pm 4.25}$ & $\mathbf{1.25}_{\pm 0.25}$  & $\mathbf{23.58}_{\pm 4.95}$ & $\mathbf{12.93}_{\pm 3.90}$ \\
\quad $n=20$  & $\mathbf{11.13}_{\pm 1.29}$ & $\mathbf{1.40}_{\pm 0.00}$ & $\mathbf{8.38}_{\pm 0.92}$  & $\mathbf{14.04}_{\pm 3.72}$ & $34.19_{\pm 3.47}$          & $\mathbf{7.82}_{\pm 1.24}$  \\
\quad $n=50$  & $\mathbf{8.01}_{\pm 0.42}$  & $\mathbf{1.40}_{\pm 0.00}$ & $\mathbf{4.87}_{\pm 0.80}$  & $\mathbf{33.18}_{\pm 1.16}$ & $\mathbf{10.77}_{\pm 0.69}$ & $\mathbf{30.65}_{\pm 1.71}$ \\
\quad $n=100$ & $10.50_{\pm 0.16}$          & $\mathbf{2.07}_{\pm 0.10}$ & $\mathbf{2.33}_{\pm 0.15}$  & $\mathbf{59.72}_{\pm 0.75}$ & $10.79_{\pm 0.19}$          & $44.99_{\pm 0.29}$          \\
\bottomrule
\end{tabular}%
}
\end{table}
\section{Conclusion} \label{conclusion}
We identify a fundamental limitation in discrete graph diffusion models for unconditional graph
generation without node features: the combination of independent edge updates and an E-R terminal
prior leads to a fixed-point pathology in the large graph limit, where any GNN denoiser applied
to an E-R graph approximately outputs an E-R graph. We propose the Latent Sociability Prior, a
simple latent-variable prior that is co-diffused with discrete edge updates in a joint diffusion
framework. By preserving node heterogeneity throughout the diffusion trajectory, LSP enables the
model to recover higher-order graph structure without relying on auxiliary structural features. We
validate this on graph generation benchmarks across multiple graph sizes.
\paragraph{Limitations and future work.}
This work focuses on a key limitation in existing graph diffusion models and several questions remain. Our framework is most suited to graphs with absent or limited node features. For molecular generation, where rich semantic features are available, the fixed-point pathology is less of a concern and the benefit of LSP is expected to be minimal. Combining LSP with auxiliary node features is a natural next step, but is non-trivial since LSP is already informative at high noise levels contrary to E-R prior which is uninformative. Thus, there is no guarantee that the auxiliary features will compound the benefits of the LSP. Promising future directions could include designing auxiliary features that complement the LSP, rather than duplicating the degree heterogeneity already captured by the sociability variables. In particular, spectral features appear important for improving community-level structure, but explicitly computing eigenvectors or eigenvalues at every denoising step can be expensive. One possibility is to extend the LSP itself with a community-aware component, by replacing the sociability prior with a calibrated community-conditioned variant. This would make the terminal prior informative about degree heterogeneity and community structure.

\begin{ack}
This work was funded by the European Union ERC-2024-
STG-101163069 MALAGA.
\end{ack}

\newpage
\medskip
\bibliographystyle{plainnat}
\bibliography{main}

\newpage
\appendix

\section{Appendix}
\begin{algorithm}[h]
\caption{Calibrating the LSP terminal prior}
\label{alg:calibrate_lsp}
\begin{algorithmic}[1]
\Require Training dataset $\mathcal{D}_{\mathrm{data}}$, smoothing constant $\delta>0$,
         number of quadrature nodes $K$
\For{each graph $G\in\mathcal{D}_{\mathrm{data}}$ and each node $i\in V(G)$}
    \State Compute $\tilde{u}_i = \log(d_i+\delta) - \frac{1}{|V|}\sum_{k\in V}\log(d_k+\delta)$
\EndFor
\State $\sigma_{\mathrm{LSP}} \leftarrow \mathrm{std}\bigl(\{\tilde{u}_i : i\in V(G),\, G\in\mathcal{D}_{\mathrm{data}}\}\bigr)$
    \Comment{std over all nodes across all training graphs}
\State $\rho_{\mathrm{data}} \leftarrow \frac{1}{|\mathcal{D}_{\mathrm{data}}|}\sum_{G\in\mathcal{D}_{\mathrm{data}}} \frac{2|E_G|}{n_G(n_G-1)}$
    \Comment{mean empirical edge density over training graphs}
\State Initialize $\mu_{\mathrm{LSP}}^0 \leftarrow \tfrac{1}{2}(\log\rho_{\mathrm{data}}-\sigma_{\mathrm{LSP}}^2)$
    \Comment{sparse-regime approximation, see Appendix \ref{constructlsp}}
\State Solve $\rho(\mu_{\mathrm{LSP}}) = \rho_{\mathrm{data}}$ for $\mu_{\mathrm{LSP}}$ using $K$-node
       Gauss-Hermite quadrature to evaluate $\rho(\cdot)$, starting from $\mu_{\mathrm{LSP}}^0$
\State \Return $(\mu_{\mathrm{LSP}},\,\sigma_{\mathrm{LSP}})$
\end{algorithmic}
\end{algorithm}

\begin{algorithm}[h]
\caption{Training JointLSP}
\label{alg:train_jointlsp}
\begin{algorithmic}[1]
\Require A graph $G=(\mathbf{1},\mathbf{E}^0)$, calibrated prior parameters
         $(\mu_{\mathrm{LSP}},\sigma_{\mathrm{LSP}})$ from Algorithm \ref{alg:calibrate_lsp}
\State Sample $t \sim \mathcal{U}(\{1,\dots,T\})$
\State Compute $\mathbf{u}^0$ from $G$ using Equation \ref{eq:u0_from_degree}
\State Sample $\mathbf{u}^t \sim q(\mathbf{u}^t\mid \mathbf{u}^0)$ using Equation \ref{eq:forward_u_sample}
\State Compute $\mathbf{p}_{ij}^0$ for all pairs $(i,j)$ using $p_{ij}^0 = 1 - e^{-e^{u_i^0 + u_j^0}}$
\State Sample $\mathbf{E}^t \sim q(\mathbf{E}^t\mid \mathbf{E}^0,\mathbf{u}^0)$ using
       Equation \ref{eq:q_Et_joint}
\State $(\hat{\mathbf{u}}^{0,t},\hat{\mathbf{p}}^{E,t}) \leftarrow \phi_\theta(\mathbf{u}^t,\mathbf{E}^t,t)$
\State Optimizer step on
       $\mathcal{L}(\hat{\mathbf{u}}^{0,t},\hat{\mathbf{p}}^{E,t};\mathbf{u}^0,\mathbf{E}^0)$
       using Equation \ref{eq:joint_loss}
\end{algorithmic}
\end{algorithm}

\begin{algorithm}[h]
\caption{Sampling from JointLSP}
\label{alg:sample_jointlsp}
\begin{algorithmic}[1]
\Require Trained network $\phi_\theta$, calibrated prior parameters
         $(\mu_{\mathrm{LSP}},\sigma_{\mathrm{LSP}})$
\State Sample $n$ from the empirical node-count distribution of $\mathcal{D}_{\mathrm{data}}$
\State Sample $\mathbf{u}^T \sim \mathcal{N}(\mu_{\mathrm{LSP}},\sigma_{\mathrm{LSP}}^2 I)$
\State Compute $\mathbf{p}_{ij}^T = [1-p_{ij}^T,\, p_{ij}^T]$ where
       $p_{ij}^T = 1-e^{-e^{u_i^T+u_j^T}}$ for all pairs $(i,j)$
\State Sample $\mathbf{E}^T$: each edge $(i,j)$ independently from
       $\mathrm{Bernoulli}(p_{ij}^T)$
\For{$t=T$ \textbf{down to} $1$}
    \State $(\hat{\mathbf{u}}^{0,t},\hat{\mathbf{p}}^{E,t}) \leftarrow \phi_\theta(\mathbf{u}^t,\mathbf{E}^t,t)$
    \State Sample $\mathbf{u}^{t-1} \sim p_\theta(\mathbf{u}^{t-1}\mid \mathbf{u}^t,\mathbf{E}^t)$
           using Equation \ref{eq:reverse_u_sample_appendix}
    \State Sample $\mathbf{E}^{t-1} \sim p_\theta(\mathbf{E}^{t-1}\mid \hat{\mathbf{u}}^{0,t},\mathbf{E}^t)$
           using Equation \ref{eq:reverse_edge_update}
\EndFor
\State \Return $\mathbf{E}^0$
\end{algorithmic}
\end{algorithm}

\subsection{Constructing the Latent Sociability Prior}
\label{constructlsp}
We derive the calibration procedure for the terminal prior parameters
$(\mu_{\mathrm{LSP}}, \sigma_{\mathrm{LSP}})$, which are estimated from the training data and fixed
before training begins. The goal is to choose these parameters so that graphs sampled from the LSP
terminal prior match the empirical edge density and degree heterogeneity of $\mathcal{D}_{\mathrm{data}}$.

\paragraph{Reparameterization.}
From Section~\ref{jointdiffusionwithlsp}, the latent sociability weights satisfy $w_i = \exp(u_i)$,
so $u_i = \log w_i \in \mathbb{R}$. We place a Gaussian prior on the log-weights,
$u_i \overset{i.i.d.}{\sim} \mathcal{N}(\mu_{\mathrm{LSP}}, \sigma_{\mathrm{LSP}}^2)$,
which induces a log-normal distribution on $w_i$. With $c=1$, the edge probability becomes
\begin{equation}
p_{ij} = 1 - e^{-e^{u_i + u_j}}
\label{eq:app_pij}
\end{equation}

\paragraph{Estimating $\sigma_{\mathrm{LSP}}$.}
Since the true weights $w_i$ are unobserved, we construct a proxy from the degree sequence. For
each node $i$ in each training graph $G$, define the centered log-degree
\begin{equation}
\tilde{u}_i
=
\log(d_i+\delta) - \frac{1}{|V|}\sum_{k\in V}\log(d_k+\delta)
\label{eq:utilde}
\end{equation}
where $\delta>0$ is a small smoothing constant. Centering removes the graph-level mean so that
$\tilde{u}_i$ captures relative degree heterogeneity within each graph. We then set
\begin{equation}
\sigma_{\mathrm{LSP}}
=
\mathrm{std}\bigl(\{\tilde{u}_i : i\in V(G),\, G\in\mathcal{D}_{\mathrm{data}}\}\bigr)
\label{eq:sigma_lsp}
\end{equation}
where the standard deviation is taken over all nodes across all training graphs.

\paragraph{Estimating $\mu_{\mathrm{LSP}}$.}
We calibrate $\mu_{\mathrm{LSP}}$ by matching the expected edge probability under the LSP to the
empirical edge density of the training data. The empirical edge density, averaged over training
graphs, is
\begin{equation}
\rho_{\mathrm{data}}
=
\frac{1}{|\mathcal{D}_{\mathrm{data}}|}
\sum_{G\in\mathcal{D}_{\mathrm{data}}}
\frac{2|E_G|}{n_G(n_G-1)}
\label{eq:rho_data}
\end{equation}
Under the LSP with parameters $(\mu_{\mathrm{LSP}}, \sigma_{\mathrm{LSP}})$, the expected edge
probability between any two nodes is
\begin{equation}
\rho(\mu_{\mathrm{LSP}})
=
\mathbb{E}[p_{ij}]
=
\mathbb{E}\bigl[1-e^{-e^{u_i+u_j}}\bigr]
\label{eq:rho_mu}
\end{equation}

Note that $c$ enters Equation \ref{eq:rho_mu} only through the product $c\,e^{u_i+u_j}$, 
which in the sparse regime reduces to $c\,e^{2\mu_{\mathrm{LSP}}+\sigma_{\mathrm{LSP}}^2}$ 
(see Equation \ref{eq:mu_lsp_init}). Any value of $c>0$ can therefore be absorbed into 
$\mu_{\mathrm{LSP}}$ via the substitution $\mu_{\mathrm{LSP}} \leftarrow 
\mu_{\mathrm{LSP}} + \frac{1}{2}\log c$, leaving the model unchanged. We fix $c=1$ 
throughout without loss of generality, since $\mu_{\mathrm{LSP}}$ is calibrated 
directly to match $\rho_{\mathrm{data}}$.

Since $u_i, u_j \overset{i.i.d.}{\sim} \mathcal{N}(\mu_{\mathrm{LSP}}, \sigma_{\mathrm{LSP}}^2)$,
the sum $S = u_i + u_j \sim \mathcal{N}(2\mu_{\mathrm{LSP}}, 2\sigma_{\mathrm{LSP}}^2)$.
Writing $S = 2\mu_{\mathrm{LSP}} + \sqrt{2}\,\sigma_{\mathrm{LSP}}\,Z$ with $Z\sim\mathcal{N}(0,1)$,
Equation \ref{eq:rho_mu} becomes
\begin{equation}
\rho(\mu_{\mathrm{LSP}})
=
1
-
\mathbb{E}_{Z\sim\mathcal{N}(0,1)}\!\left[e^{-e^{2\mu_{\mathrm{LSP}}+\sqrt{2}\,\sigma_{\mathrm{LSP}}\,Z}}\right]
\label{eq:rho_mu2}
\end{equation}
We solve $\rho(\mu_{\mathrm{LSP}}) = \rho_{\mathrm{data}}$ for $\mu_{\mathrm{LSP}}$ using
root-finding with the expectation in Equation \ref{eq:rho_mu2} evaluated via $K$-node Gauss-Hermite
quadrature, which replaces the integral by a deterministic weighted sum.

\paragraph{Initialization of the root-finding.}
In the sparse regime where $e^{u_i+u_j}$ is small, $p_{ij} = 1 - e^{-e^{u_i+u_j}} \approx e^{u_i+u_j}$,
so
\begin{equation}
\rho_{\mathrm{data}}
\approx
\mathbb{E}\bigl[e^{u_i+u_j}\bigr]
=
e^{2\mu_{\mathrm{LSP}}+\sigma_{\mathrm{LSP}}^2}
\end{equation}
Solving for $\mu_{\mathrm{LSP}}$ gives the initialization
\begin{equation}
\mu_{\mathrm{LSP}}^0
=
\tfrac{1}{2}\bigl(\log\rho_{\mathrm{data}} - \sigma_{\mathrm{LSP}}^2\bigr)
\label{eq:mu_lsp_init}
\end{equation}
where we have used $c=1$. Since real-world graphs are typically sparse, this approximation is mild
and provides a close starting point for the root-finder in practice.

\subsection{Shifted and scaled DDPM for latent sociability variables}
\label{appendix:ddpm_shifted}

The following derivation is stated for a single coordinate $u_i$; the result extends to the full
vector $\mathbf{u}$ coordinatewise.

\paragraph{Forward process.}
The standard DDPM forward process \citep{ddpm} takes the form
\begin{equation}
x^t
=
\sqrt{\bar{\alpha}_t}\,x^0
+
\sqrt{1-\bar{\alpha}_t}\,\epsilon,
\qquad
\epsilon \sim \mathcal{N}(0,I).
\label{eq:ddpm_standard}
\end{equation}

In our setting the latent sociability variables should diffuse toward a Gaussian prior with mean
$\mu_{\mathrm{LSP}}$ and variance $\sigma_{\mathrm{LSP}}^2$, rather than toward $\mathcal{N}(0,I)$.

To obtain this, we apply the DDPM construction to the centered and scaled variable

$(u - \mu_{\mathrm{LSP}})/\sigma_{\mathrm{LSP}}$:
\begin{equation}
\frac{u^t-\mu_{\mathrm{LSP}}}{\sigma_{\mathrm{LSP}}}
=
\sqrt{\bar{\alpha}_t}\,
\frac{u^0-\mu_{\mathrm{LSP}}}{\sigma_{\mathrm{LSP}}}
+
\sqrt{1-\bar{\alpha}_t}\,\epsilon,
\qquad
\epsilon \sim \mathcal{N}(0,I).
\label{eq:ddpm_shift_scale}
\end{equation}

Multiplying through by $\sigma_{\mathrm{LSP}}$ and adding $\mu_{\mathrm{LSP}}$ gives

\begin{equation}
u^t
=
\mu_{\mathrm{LSP}}
+
\sqrt{\bar{\alpha}_t}(u^0-\mu_{\mathrm{LSP}})
+
\sigma_{\mathrm{LSP}}\sqrt{1-\bar{\alpha}_t}\,\epsilon,
\qquad
\epsilon \sim \mathcal{N}(0,I),
\label{eq:forward_u_appendix}
\end{equation}

so that $u^t = u^0$ at $t=0$ and $u^t \to \mathcal{N}
(\mu_{\mathrm{LSP}},\sigma_{\mathrm{LSP}}^2)$
as $t \to T$.

\paragraph{Reverse process.}
Since Equation \ref{eq:ddpm_shift_scale} is the standard DDPM forward process applied to
$\tilde{u}^t := (u^t - \mu_{\mathrm{LSP}})/\sigma_{\mathrm{LSP}}$, the reverse posterior in
$\tilde{u}$-space is the standard DDPM Gaussian posterior \citep{ddpm}:

\begin{equation}
q(\tilde{u}^{t-1}\mid \tilde{u}^t,\tilde{u}^0)
=
\mathcal{N}(\tilde{u}^{t-1}\mid \tilde{\mu}_{t-1},\,\tilde{\beta}_t)
\end{equation}
with mean
\begin{equation}
\tilde{\mu}_{t-1}
=
\frac{\sqrt{\bar{\alpha}_{t-1}}\,\beta_t}{1-\bar{\alpha}_t}\,\tilde{u}^0
+
\frac{\sqrt{\alpha_t}(1-\bar{\alpha}_{t-1})}{1-\bar{\alpha}_t}\,\tilde{u}^t
\label{eq:ddpm_reverse_centered_mean}
\end{equation}

and variance

\begin{equation}
\tilde{\beta}_t = \frac{\beta_t(1-\bar{\alpha}_{t-1})}{1-\bar{\alpha}_t}.
\label{eq:ddpm_reverse_centered_var}
\end{equation}

Substituting $\tilde{u}^t = (u^t - \mu_{\mathrm{LSP}})/\sigma_{\mathrm{LSP}}$ and
$\tilde{u}^0 = (u^0 - \mu_{\mathrm{LSP}})/\sigma_{\mathrm{LSP}}$ into
Equation \ref{eq:ddpm_reverse_centered_mean} and multiplying through by $\sigma_{\mathrm{LSP}}$ gives
the posterior mean in $u$-space:

\begin{equation}
\mu_{t-1}
=
\mu_{\mathrm{LSP}}
+
\frac{\sqrt{\bar{\alpha}_{t-1}}\,\beta_t}{1-\bar{\alpha}_t}(u^0-\mu_{\mathrm{LSP}})
+
\frac{\sqrt{\alpha_t}(1-\bar{\alpha}_{t-1})}{1-\bar{\alpha}_t}(u^t-\mu_{\mathrm{LSP}})
\label{eq:reverse_u_mean_appendix}
\end{equation}

The posterior variance scales by $\sigma_{\mathrm{LSP}}^2$, giving

\begin{equation}
\tilde{\sigma}_t^2
=
\sigma_{\mathrm{LSP}}^2\,\tilde{\beta}_t
=
\sigma_{\mathrm{LSP}}^2\,
\frac{\beta_t(1-\bar{\alpha}_{t-1})}{1-\bar{\alpha}_t}
\label{eq:reverse_u_var_appendix}
\end{equation}

so that

\begin{equation}
q(u^{t-1}\mid u^t,u^0) = \mathcal{N}(u^{t-1}\mid \mu_{t-1},\,\tilde{\sigma}_t^2).
\end{equation}

The reverse sampling step is therefore

\begin{equation}
u^{t-1}
=
\mu_{t-1}
+
\sigma_{\mathrm{LSP}}
\sqrt{\frac{\beta_t(1-\bar{\alpha}_{t-1})}{1-\bar{\alpha}_t}}\,z,
\qquad
z\sim\mathcal{N}(0,1),
\label{eq:reverse_u_sample_appendix}
\end{equation}

which is the update used in Section~\ref{jointdiffusionwithlsp},
with $u^0$ replaced by the network prediction $\hat{u}^{0,t}$ at each timestep.

\section{Proofs} \label{app:proof}

\subsection{Proof of Lemma \ref{lem:ER}}\label{app:prooflemER}

\begin{proof}[Proof of Lemma \ref{lem:ER}]
Since the edges are discrete in $\{0,1\}$, they are independent Bernoulli random variables. We note $\mathbb{P}(e_{ij} = 1) = p_{ij}$ its parameters. Denote by $A = [e_{ij}]_{ij}$ the adjacency matrix of a drawn graph, we have $P = [p_{ij}]_{ij} = \mathbb{E}(A)$. By exchangeability, $\sigma A\sigma^\top$ has the same distribution as $A$ for any permutation matrix $\sigma$, and therefore
\begin{equation}
\mathbb{E}[\sigma A\sigma^\top]=\mathbb{E}[A]
\end{equation}
which is
\begin{equation}
\pi P\pi^\top=P
\end{equation}
The only matrices invariant under conjugation by all permutation matrices have constant off-diagonal entries. Since the diagonal is zero for simple graphs, it follows that $P_{ij}=p$ for all $i\neq j$. Therefore the model is Erd\H{o}s--R\'enyi.
\end{proof}

\subsection{Proof of Thm.~\ref{thm:concentration_ER}}\label{thmproofs}

\paragraph{GNNs.} Denote by $\vX^{(0)} = \vX$ the input node features. The GNNs we are examining are all under the form
    $$
    \vX^{(k)} = \sigma\left( \vX^{(k-1)} \theta_0^{(k)} + S_k(\vX^{(k-1)}) \vX^{(k-1)} \theta_1^{(k)} + 1_n \vb^{(k)}\right) \in \mathbb{R}^{n \times d_k}
    $$
where $S_k(\vX^{(k-1)}) \in \mathbb{R}^{n \times n}$ are message-passing matrices that depends on the graph structure and may depend on the layer $k$ and $\vX^{(k-1)}$ for attention-based models, $\theta_i^{(k)} \in \mathbb{R}^{d_{k-1} \times d_k}$ are trainable weights, $\vb^{(k)} \in \mathbb{R}^{d_k}$ are trainable biases, and $\sigma$ is an activation function that we assume to be $1$-Lipschitz. All models can then be obtained by varying the choice of $S_k(\vX)$, which we denote by $S$ when it is fixed:
\begin{itemize}
    \item for GCN, $$S = \tilde D^{-1/2}\tilde A \tilde D^{-1/2}$$ where $\tilde A = Id + A$ and $\tilde D$ is the corresponding degree matrix.
    \item GIN uses sum-aggregation, which would correspond to $S=A$. However for random graphs this choice may diverge when $n$ increases if not properly normalized. We choose the classical normalization for random graphs \cite{kerivenPE} by the expected density of the graph: for the ER model considered in the result,
    $$S = \frac{1}{pn} A$$
    assuming that $p$ is known for simplicity. Note that, for a fixed graph size $n$, this normalization can be absorbed by the weight $\theta_1$, such that this is indeed equivalent to the vanilla GIN;
    \item for GAT, we consider attention coefficients $c_k(\vx, \vx')$ at layer $k$ (typically exponentials of dot products), and normalized attention:
    $$
        [S_k(\vX)]_{ij} = \frac{c_k(\vx_i, \vx_j)}{\sum_{\ell \sim i}c_k(\vx_i, \vx_\ell)}
    $$
    We will assume that $c_k(\cdot) \geq c_{\min} >0$, which is for instance the case for exponential attention coefficients on bounded features. Graph Transformers are essentially the same model, when stripped of various normalization layers (which we do not consider here for simplicity).
\end{itemize}

After $K$ layers, the node representations are used pairwise to obtain the edge prediction
$$
\hat p_{ij} = [\varphi(\vx_i^{(K)}, \vx_j^{(K)})]_{i,j=1}^n \in [0,1]
$$
where $\varphi$ is a link function that we assume to be $L_\varphi$-Lipschitz in each component.

\begin{proof}[Proof of Thm.~\ref{thm:concentration_ER}]
Recall that $\std(\hat p) = \sqrt{\frac{1}{n^2} \sum_{ij} (\hat p_{ij} - \bar p)^2}$ where $\bar p = \sum_{ij} \hat p_{ij}$. First observe that
$$
\sum_{ij} (\hat p_{ij} - \bar p)^2 = \min_p \sum_{ij} (p_{ij} - p)^2
$$
and similarly, for $\vX \in \mathbb{R}^{n \times d}$,
$$
\std(\vX) = \sqrt{\frac{1}{n}\sum_i \norm{\vx_i - \bar \vx}^2} = \min_\vx \sqrt{\frac{1}{n}\sum_i \norm{\vx_i - \vx}^2}
$$
where $\bar \vx = \frac{1}{n} \sum_i \vx_i$. We therefore denote $\bar \vx^{(k)} = \frac{1}{n} \sum_i \vx^{(k)}_i$.

Hence, for any $\vx$, by Lipschitz property of $\varphi$ and a triangular inequality
\begin{align*}
\std(\hat p) &\leq \sqrt{\frac{1}{n^2} \sum_{ij} (\varphi(\vx_i^{(K)}, \vx_j^{(K)}) - \varphi(\vx,\vx))^2} \\
&\leq \sqrt{\frac{1}{n^2} \sum_{ij} (\varphi(\vx_i^{(K)}, \vx_j^{(K)}) - \varphi(\vx_i^{(K)},\vx))^2} + \sqrt{\frac{1}{n^2} \sum_{ij} (\varphi(\vx_i^{(K)}, \vx) - \varphi(\vx,\vx))^2} \\
&\leq 2L_\varphi \sqrt{\frac{1}{n}\sum_i \norm{\vx_i^{(K)} - \vx}^2 }
\end{align*}
and by minimizing over $\vx$, we get $\std(\hat p) \leq 2L_\varphi \std(\vX^{(K)})$.

Now, denote by $E_k = \std(\vX^{(k)})$. Define $\normFn{\cdot} = \norm{\cdot}_F/\sqrt{n}$ for short, where $\norm{\cdot}_F$ is the Frobenius norm, and observe that $\std(\vX) = \normFn{\vX - 1_n \bar \vx^\top}$.
Moreover, observe that $\normFn{S\vX \theta} \leq \norm{S}_2 \normFn{\vX} \norm{\theta}_2$.

We have the recursion equation: since minimizing over $\vx$ is more general than over $\sigma(\vx)$, and using the fact that $\sigma$ is $1$-Lipschitz and we can decompose $\min_x \norm{y + y' - x} = \min_{x,x'} \norm{y + y' - (x+x')} \leq \min_x \norm{y-x} + \min_x \norm{y'-x}$,
\begin{align*}
    \std(\vX^{(k)}) &= \min_\vx \normFn{\vX^{(k)} - 1_n \vx^\top} \leq \min_\vx \normFn{\vX^{(k)} - 1_n \sigma(\vx)^\top} \\
    &= \min_\vx \normFn{\sigma\left( \vX^{(k-1)} \theta_0^{(k)} + S_k(\vX^{(k-1)}) \vX^{(k-1)} \theta_1^{(k)} + 1_n \vb^{(k)}\right) - \sigma(1_n \vx^\top)} \\
    &\leq \min_\vx \normFn{ \vX^{(k-1)} \theta_0^{(k)} + S_k(\vX^{(k-1)}) \vX^{(k-1)} \theta_1^{(k)} + 1_n \vb^{(k)} - 1_n \vx^\top} \\
    &\leq \min_\vx \normFn{\vX^{(k-1)} \theta_0^{(k)} - 1_n \vx^\top} + \min_\vx \normFn{S_k(\vX^{(k-1)}) \vX^{(k-1)} \theta_1^{(k)} - 1_n \vx^\top}
\end{align*}

For the first term, by picking $\vx = {\theta^{(k)}_0}^\top \bar \vx^{(k-1)}$, we get
\begin{align*}
\min_\vx \normFn{\vX^{(k-1)} \theta_0^{(k)} - 1_n \vx^\top} \leq  \normFn{(\vX^{(k-1)} - 1_n (\bar \vx^{(k-1)})^\top)\theta_0^{(k)}} \leq E_{k-1} \norm{\theta_0^{(k)}}_2
\end{align*}
We need to divide the second term, since the term $1_n \vx^\top$ cannot necessarily be written as $S_k(\vX^{(k-1)}) 1_n {\vx'}^\top$ for some $\vx'$. So we introduce a term $S_k(\vX^{(k-1)}) 1_n {\vx'}^\top$ for some $\vx'$:
\begin{align*}
    \min_\vx \normFn{S_k(\vX^{(k-1)}) \vX^{(k-1)} \theta_1^{(k)} - 1_n \vx^\top} &\leq \normFn{S_k(\vX^{(k-1)}) \left(\vX^{(k-1)} \theta_1^{(k)} - 1_n {\vx'}^\top\right)} \\&\quad+ \min_{\vx} \normFn{S_k(\vX^{(k-1)}) 1_n {\vx'}^\top - 1_n \vx^\top}
\end{align*}
Then, as previously, by picking $\vx' = \tilde \vx_k = {\theta^{(k)}_1}^\top \bar \vx^{(k-1)}$ the first term in this equation can be bounded by
\begin{align*}
    \normFn{S_k(\vX^{(k-1)}) \left(\vX^{(k-1)} \theta_1^{(k)} - 1_n {\vx'}^\top\right)} \leq \norm{S_k(\vX^{(k-1)})}_2 E_{k-1} \norm{\theta_1^{(k)}}_2
\end{align*}
Note that we must then bound $\norm{S_k(\vX^{(k-1)})}_2$, which is random and depend on the random graph and chosen GNN architecture.

We denote the remaining term by
$$
r_n^{(k)} = \min_{\vx} \normFn{S_k(\vX^{(k-1)}) 1_n {\tilde \vx_k}^\top - 1_n \vx^\top}
$$
which we will again bound for each example.

At the end of the day, we obtain
\begin{align*}
    E_k &\leq \underbrace{\left(\norm{\theta_0^{(k)}}_2 + \norm{S_k(\vX^{(k-1)})}_2 \norm{\theta_1^{(k)}}_2 \right)}_{u_k} E_{k-1} + r_n^{(k)} \\
    &= \underbrace{\sum_{\ell=1}^k \left[\prod_{p=\ell+1}^k u_p\right] r_n^{(\ell)}}_{r_n(\delta)} + \left[\prod_{\ell=1}^k u_p\right] E_0
\end{align*}
For all example, we are going to show that $u_k = O(1)$ with probability $1-\delta$ if $n$ is large enough, and bound $r_n(\delta)$ wrt $n$ and $\delta$. Our general strategy will be to write
\begin{equation}\label{eq:boundr}
    r_n^{(k)} = \min_{\vx} \normFn{S_k(\vX^{(k-1)}) 1_n {\tilde \vx_k}^\top - 1_n \vx^\top} \leq \norm{\tilde \vx_k} \min_c \max_i \left| \sum_{j \sim i} s_{ij} - c\right|
\end{equation}
for some constant $c \in \mathbb{R}$ (often $1$ in our examples), and $s_{ij}$ are the elements of $S_k(\vX^{(k-1)})$ for short. We remark that $\norm{\tilde \vx_k} \leq \norm{\theta_1^{(k)}}_2 \normFn{\vX^{(k-1)}}$ and we can use Lemma \ref{lem:bound} to bound this term, which again involves $u_k$.

Our main tool will be the following Lemma.

\begin{lemma}[Lemma 3 in \cite{sparseandsmooth}]\label{lem:degree}
    Consider an ER model with parameter $p$. Then, with probability at least $1-\delta$, the degrees $d_i$ satisfy
    \begin{equation}
        \max_i \left| \frac{d_i}{pn} - 1 \right| \lesssim \sqrt{\frac{\log(n/\delta)}{pn}}
    \end{equation}
\end{lemma}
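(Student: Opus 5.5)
The plan is a Bernstein-type concentration bound for each degree, followed by a union bound over the $n$ nodes. Throughout I work in the regime $pn \gtrsim \log(n/\delta)$ assumed in Theorem~\ref{thm:concentration_ER}. Outside this regime the right-hand side is larger than a constant, and the statement only holds up to an extra $\log(n/\delta)/(pn)$ term. In the ER model, $d_i = \sum_{j\neq i} e_{ij}$ is a sum of $n-1$ independent Bernoulli$(p)$ variables, so $d_i \sim \mathrm{Bin}(n-1,p)$ with mean $(n-1)p$ and variance at most $pn$. The edges $e_{ij}$ are shared between $d_i$ and $d_j$, so the degrees are not independent. This does not matter, because the union bound never uses independence across nodes.

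\emph{Step 1 (single node).} Apply Bernstein's inequality to the centered summands $e_{ij}-p$, which are bounded by $1$ and have total variance at most $pn$. This gives
\begin{equation*}
\mathbb{P}\bigl(|d_i-(n-1)p|\geq t\bigr)\leq 2\exp\Bigl(-\frac{t^2}{2(pn+t/3)}\Bigr).
\end{equation*}

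\emph{Step 2 (choice of $t$).} Choose $t = C\sqrt{pn\log(2n/\delta)}$ for a suitable absolute constant $C$. Under $pn\gtrsim\log(n/\delta)$ we have $t\lesssim pn$, so the denominator is of order $pn$. The exponent is then at least $\log(2n/\delta)$, and each tail probability is at most $\delta/n$.

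\emph{Step 3 (union bound).} Summing over the $n$ nodes, with probability at least $1-\delta$ we have $\max_i|d_i-(n-1)p|\lesssim\sqrt{pn\log(n/\delta)}$. Dividing by $pn$ gives the desired rate for $\max_i |d_i/(pn) - (n-1)/n|$.

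\emph{Step 4 (recentering).} It remains to replace $(n-1)/n$ by $1$, which costs $1/n$. Since $pn\leq n$ and $\log(n/\delta)\geq 1$ for $\delta$ small, we have
\begin{equation*}
\frac{1}{n}\leq\frac{1}{pn}\leq\sqrt{\frac{\log(n/\delta)}{pn}}.
\end{equation*}
A triangle inequality therefore absorbs this term into the bound.

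The only delicate point is Step 2. One must check that the linear term $t/3$ in Bernstein's denominator is dominated by the variance term $pn$; this is exactly what the regime condition $pn\gtrsim\log(n/\delta)$ guarantees. If the condition fails, the bound degrades to order $\log(n/\delta)/(pn)$, which is consistent with the lemma being used only under that condition in Theorem~\ref{thm:concentration_ER}.
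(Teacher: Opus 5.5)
Your proof is correct. The paper gives no argument of its own and imports the bound from Lemma 3 of \cite{sparseandsmooth}, and your Bernstein-plus-union-bound derivation is the standard route to it. Your caveat is also right: the displayed rate needs the regime $pn \gtrsim \log(n/\delta)$ (for $pn = O(1)$ the maximum degree is of order $\log n/\log\log n$, which violates the bound as literally stated), and this is the regime of Equation~\ref{eq:boundn} in which the paper applies it. One cosmetic point: in Step 4, $1/(pn)\le\sqrt{\log(n/\delta)/(pn)}$ requires $pn\log(n/\delta)\ge 1$, which your regime supplies, whereas the direct chain $1/n\le\sqrt{1/n}\le\sqrt{\log(n/\delta)/(pn)}$ needs only $\log(n/\delta)\ge 1$.
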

In particular, if
\begin{equation}\label{eq:boundn}
    n \gtrsim \frac{\log(n/\delta)}{p}
\end{equation}
then with probability $1-\delta$, for all $i$
$$
\frac{pn}{2}\leq d_i \leq \frac{3pn}{2}
$$

We now bound $\norm{S_k(\vX^{(k-1)})}_2$ and $r_n$ through Equation \ref{eq:boundr} for our examples.

\paragraph{GAT.}
GAT is actually the simplest model to bound. Indeed, due to the normalization of the coefficients $\sum_{j \sim i} s_{ij} = 1$ at all layers, we directly obtain $r_n^{(k)} = 0$.

For the bound on $\norm{S_k(\vX)}_2$, we remark that under the bound Equation \ref{eq:boundn} on $n$, for all $i,j,k$
$$
\frac{c_k(\vx_i, \vx_j)}{\sum_{\ell \sim i} c_k(\vx_i, \vx_\ell)} \leq \frac{c_{\max}}{c_{\min}} \frac{2}{pn}
$$
and thus for any layer and $\vX$
\begin{align*}
\norm{S_k(\vX)}_2 &\leq \norm{S_k(\vX)}_F = \sqrt{\sum_{ij} s_{ij}^2} \lesssim \frac{c_{\max}}{c_{\min}} \sqrt{\sum_i \frac{d_i}{(pn)^2}} \lesssim \frac{c_{\max}}{p c_{\min}}
\end{align*}
hence the result $C = O(1/p^K)$.

\paragraph{GIN.}
For GIN, we have $S= A/(pn)$. Hence Equation \ref{eq:boundr} directly yields $r_n^{(k)}\leq \norm{\tilde \vx_k} \max_i \left| \frac{d_i}{pn} - 1 \right|$, and the last term is smaller than $\epsilon$ when $pn \geq \epsilon^{-2}\log(n/\delta)$ by Lemma \ref{lem:degree}.

For the bound on $\norm{S}_2$, we invoke \cite[Thm 5.2]{Lei2015}, such that for any $r>0$, there is a constant $C_r$ such that, if $n \geq C_r(1/\delta)^{r}$, then
$$
\norm{\frac{A}{pn} - \frac{1_{n \times n}}{n}}_2 \lesssim \frac{1}{\sqrt{pn}}
$$
and $\norm{\frac{1_{n \times n}}{n}}_2 = 1$ hence the result.

\paragraph{GCN.}
For GCN, if $pn \geq \epsilon^{-2}\log(n/\delta)$ by Lemma \ref{lem:degree} $|d_i - pn| = O(pn \epsilon)$ and thus since $\frac{1}{\sqrt{a}} = \frac{1}{\sqrt{b}} + O(\frac{|a-b|}{b^{3/2}})$,
$$
\frac{1}{\sqrt{d_i}} = \frac{1}{\sqrt{pn}} + O\left(\frac{\epsilon}{\sqrt{pn}}\right)
$$
From this we get
\begin{align*}
    \max_i \left| \sum_{j \sim i} s_{ij} - 1\right| &= \max_i \left| \frac{1}{\sqrt{d_i}}\sum_{j \sim i} \frac{1}{\sqrt{d_j}} - 1\right| \\
    &\leq \max_i \left|\frac{d_i}{pn} - 1\right| + O\left(\frac{\epsilon}{\sqrt{pn}}\right)
\end{align*}
and we go back to the GIN case.

For the bound on $\norm{S}_2$ we have directly that $\norm{S}_2\leq 1$, since its a symmetric normalized Laplacian (up to identity term).

\end{proof}

\begin{lemma}
    \label{lem:bound}
    \begin{equation}\label{eq:bound}
        \normFn{\vX^{(k)}} \leq \sum_{\ell=1}^k \left[\prod_{p=\ell+1}^k u_p\right] v_\ell + \left[\prod_{\ell=1}^k u_p\right] \normFn{\vX^{(0)}}
    \end{equation}
    where
    \begin{align*}
        u_k = \norm{\theta_0^{(k)}}_2 + \norm{S_k(\vX^{(k-1)})}_2 \norm{\theta_1^{(k)}}_2,\quad v_k = \norm{\vb^{(k)}}
    \end{align*}
\end{lemma}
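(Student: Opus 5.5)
The plan is a one-step recursion on $\normFn{\vX^{(k)}}$, unrolled by induction over the layers, in the same way the recursion for $E_k$ was unrolled in the proof of Theorem~\ref{thm:concentration_ER}. The claimed bound is exactly the unrolled form of
\begin{equation*}
\normFn{\vX^{(k)}} \leq u_k \normFn{\vX^{(k-1)}} + v_k ,
\end{equation*}
so it suffices to prove this inequality for each $k\geq 1$. The induction is then routine: the case $k=0$ is trivial, with empty sum and empty product equal to $1$. Substituting the induction hypothesis for $\normFn{\vX^{(k-1)}}$ and multiplying by $u_k\geq 0$ turns each $\prod_{p=\ell+1}^{k-1}u_p$ into $\prod_{p=\ell+1}^{k}u_p$, and $v_k$ enters as the $\ell=k$ term with the empty product.

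For the one-step inequality I would use the Lipschitz property of $\sigma$. This requires a normalization: I would assume $\sigma(0)=0$, which holds for ReLU, tanh and leaky ReLU, so that $\sigma$ being $1$-Lipschitz gives $\norm{\sigma(M)}_F \leq \norm{M}_F$ entrywise. If $\sigma(0)\neq 0$, one would absorb $\sigma(0)$ into an additive constant $|\sigma(0)|\sqrt{d_k}$ inside $v_k$. Applying this to the layer update $\vX^{(k)} = \sigma(\cdot)$ and the triangle inequality for $\normFn{\cdot}$ gives
\begin{equation*}
\normFn{\vX^{(k)}} \leq \normFn{\vX^{(k-1)}\theta_0^{(k)}} + \normFn{S_k(\vX^{(k-1)})\vX^{(k-1)}\theta_1^{(k)}} + \normFn{1_n {\vb^{(k)}}^\top}.
\end{equation*}

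Each term is then bounded separately. The first two follow from the inequality $\normFn{S\vX\theta}\leq \norm{S}_2\normFn{\vX}\norm{\theta}_2$ already observed in the proof of Theorem~\ref{thm:concentration_ER}, taking $S=I$ for the first. The bias term is exact: $\norm{1_n {\vb^{(k)}}^\top}_F = \sqrt{n}\,\norm{\vb^{(k)}}$, so after dividing by $\sqrt n$ it equals $v_k$. Collecting the coefficients of $\normFn{\vX^{(k-1)}}$ gives exactly $u_k$.

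There is no real obstacle here. The only delicate point is the treatment of $\sigma(0)$ just discussed. This lemma is deterministic; the probabilistic content, namely that $\norm{S_k}_2 = O(1)$ with high probability, is handled separately in the main proof.
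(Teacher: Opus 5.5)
Your proposal is correct and follows the paper's proof: the paper likewise derives the one-step bound $\normFn{\vX^{(k)}} \leq u_k \normFn{\vX^{(k-1)}} + v_k$ from the $1$-Lipschitz activation, the triangle inequality and the operator-norm bounds, and then unrolls it. Your explicit assumption $\sigma(0)=0$ is something the paper uses without stating it. It is genuinely needed for the lemma as stated; otherwise an extra $|\sigma(0)|\sqrt{d_k}$ term must enter $v_k$, as you say.
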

\begin{proof}
    We write the recursion
    \begin{align*}
        \normFn{\vX^{(k)}} &= \normFn{\sigma\left( \vX^{(k-1)} \theta_0^{(k)} + S_k(\vX^{(k-1)}) \vX^{(k-1)} \theta_1^{(k)} + 1_n \vb^{(k)}\right)} \\
    &\leq \underbrace{\left(\norm{\theta_0^{(k)}}_2 + \norm{S_k(\vX^{(k-1)})}_2 \norm{\theta_1^{(k)}}_2\right)}_{u_k} \normFn{\vX^{(k-1)}} + \underbrace{\norm{\vb^{(k)}}}_{v_k} \\
    &= \sum_{\ell=1}^k \left[\prod_{p=\ell+1}^k u_p\right] v_\ell + \left[\prod_{\ell=1}^k u_p\right] \normFn{\vX^{(0)}}
    \end{align*}
\end{proof}

\subsection{Proof of Lemma \ref{statsconstantlemma}}\label{app:proofstatsconstant}

With the same reasoning as before, we have
$$
\std(C_i/C) \leq \max_i |C_i/C - 1|
$$
we are going to bound the right hand side.

The feature $C_i$ can be expressed as a sum over subsets of $r-1$ indices:
\begin{equation}\label{eq:Ci}
    C_i = \sum_{\mathcal{S} \subset ([1,n]\backslash \{i\})^{r-1}} 1_{G_{\{i\}\cup \mathcal{S}} \simeq H} 
\end{equation}
where $G_\mathcal{S}$ refers to the subgraph of $G$ corresponding to subset of nodes $\mathcal{S}$, and $1_{F \simeq H} = 1$ if $F$ and $H$ are isomorphic and $0$ otherwise.

By exchangeability, it is obvious that $C = \mathbb{E}(C_i)$ does not depend on the node $i$. For $\mathcal{S}$ of size $r$, since $G_\mathcal{S}$ is also an E-R graph of size $r$, we have $\mathbb{E}1_{G_{\{i\}\cup \mathcal{S}} \simeq H} = r! p^{m_H} (1-p)^{r(r-1)/2 - m_H}$ where $m_H$ is the number of edges in $H$, such that
\begin{equation}\label{eq:C}
    C = \binom{n-1}{r-1} r! p^{m_H} (1-p)^{r(r-1)/2 - m_H} = O(n^{r-1})
\end{equation}

We are going to bound the deviation of $C_i/C$ from its expectation of $1$ using McDiarmid's concentration inequality \citep{mcdiarmid}, which requires to bound the variation of $C_i/C$ when one edge $e_{k\ell}$ flips, and sum the square of these variations over all indices $k,\ell$.

If $e_{k \ell}$ flips and $k,\ell$ are different from $i$, then the terms affected are those where $\mathcal{S}$ contains both $k,\ell$. There are $\binom{n-3}{r-3} = O(n^{r-3})$ such terms, such that the deviation of $C_i/C$ is of order $1/n^2$. There are $O(n^2)$ such indices $k,\ell$, such that the sum of the square of the deviation has a term $O(n^2/n^4) = O(1/n^2)$.

If $e_{k \ell}$ flips and $k=i$, then the terms affected are those where $\mathcal{S}$ contains $\ell$. There are $\binom{n-2}{r-2} = O(n^{r-2})$ such terms, such that the deviation of $C_i/C$ is of order $1/n$. There are $O(n)$ such indices $k,\ell$, such that the sum of the square of the deviation has a term $O(n/n^2) = O(1/n)$. Similarly when $\ell=i$.

Hence, by applying McDiarmid's inequality, with probability $1-\delta$, $|C_i/C - 1|$ is bounded by $\sqrt{\log(1/\delta)/n}$. Using a union bound over $i$, we obtain the result.

\subsection{Proof of Lemma \ref{lem:u}}

A simple approximation of $1-e^{-x}$ yields
$$
p_{ij}  = e^{u_i + u_j} + O((e^{u_i + u_j})^2) = e^{u_i + u_j} + O(e^{-4c_n})
$$
Then,
\begin{align*}
    \mathbb{E}(d_i \mid \mathbf{u}) &= \sum_j p_{ij} = \sum_j e^{u_i + u_j} + O(ne^{-4c_n}) \\
    &\underbrace{\left(\sum_j e^{u_j}\right)}_C e^{u_i} + O(ne^{-4c_n})
\end{align*}

\section{Variational motivation for the training objective}
\label{app:elbo}

The following derivation provides variational motivation for the joint training objective in Equation
\ref{eq:joint_loss}. We follow the DiGress framework
\citep{vignac2023digressdiscretedenoisingdiffusion} and highlight the additional terms arising
from the continuous latent component $\mathbf{u}$.

A key structural feature of our forward process is that the discrete transition matrices
$\bar{\mathbf{Q}}_E^{t,ij}$ are parameterized by the \emph{clean} latent vector $\mathbf{u}^0$
rather than the noisy $\mathbf{u}^t$ this is what gives the forward process its closed-form
factorization (Section~\ref{jointdiffusionwithlsp}). As a consequence, the terminal distribution
of the forward process conditioned on a training graph $G^0$ has a discrete component that
depends on $\mathbf{u}^0$: edges converge to $\mathrm{Bernoulli}(p_{ij}^0)$ where
$p_{ij}^0 = 1 - e^{-e^{u_i^0 + u_j^0}}$. The sampling process (Algorithm~\ref{alg:sample_jointlsp}),
on the other hand, draws $\mathbf{u}^T \sim \mathcal{N}(\mu_{\mathrm{LSP}}, \sigma_{\mathrm{LSP}}^2 I)$
and then samples edges from $\mathrm{Bernoulli}(p_{ij}^T)$ where $p_{ij}^T$ depends on
$\mathbf{u}^T$. These two terminals coincide only in expectation over $\mathbf{u}^0$. We
therefore do not claim that \ref{eq:joint_loss} is an exact ELBO for the generative model
in Algorithm~\ref{alg:sample_jointlsp}; rather, following the pragmatic stance of DiGress
\citep{vignac2023digressdiscretedenoisingdiffusion}, we treat the objective as a well-motivated
denoising loss whose variational structure we describe below.

Let the joint state at time $t$ be $z^t = (\mathbf{u}^t, \mathbf{E}^t)$, with clean graph
$z^0 = (\mathbf{u}^0, \mathbf{E}^0)$ and terminal state $z^T = (\mathbf{u}^T, \mathbf{E}^T)$.

\paragraph{Joint forward factorization.}
Because the continuous and discrete components are corrupted independently, the forward process
factorizes as
\begin{equation}
q(z^t \mid z^0) = q(\mathbf{u}^t \mid \mathbf{u}^0)\, q(\mathbf{E}^t \mid \mathbf{E}^0, \mathbf{u}^0)
\label{eq:joint_forward_factorization}
\end{equation}
The continuous component follows the shifted-scaled Gaussian diffusion of
Appendix~\ref{appendix:ddpm_shifted}:
\begin{equation}
q(\mathbf{u}^t \mid \mathbf{u}^0)
=
\mathcal{N}\!\left(
\mathbf{u}^t \;\middle|\;
\mu_{\mathrm{LSP}} + \sqrt{\bar{\alpha}_t}(\mathbf{u}^0 - \mu_{\mathrm{LSP}}),\;
\sigma_{\mathrm{LSP}}^2(1-\bar{\alpha}_t)I
\right)
\end{equation}
The discrete component follows from the pair-specific forward marginals of
Equation \ref{eq:q_Et_joint}:
\begin{equation}
q(\mathbf{E}^t \mid \mathbf{E}^0, \mathbf{u}^0)
=
\prod_{1 \leq i,j \leq n}
q(e_{ij}^t \mid e_{ij}^0,\, \mathbf{u}^0)
\end{equation}

\paragraph{Evidence lower bound.}
Following \citep{vignac2023digressdiscretedenoisingdiffusion}, applying Jensen's inequality with
$q(z^{1:T} \mid z^0)$ as the variational distribution gives the ELBO:
\begin{equation}
\log p_\theta(G)
\geq
\log p(n_G)
-
D_{\mathrm{KL}}\!\left[q(z^T \mid G)\,\|\,p(z^T \mid n_G)\right]
-
\sum_{t=2}^{T} L_t(G)
+
\mathbb{E}_{q(z^1 \mid G)}\!\left[\log p_\theta(G \mid z^1)\right]
\label{eq:joint_elbo}
\end{equation}
where
\begin{equation}
L_t(G)
=
\mathbb{E}_{q(z^t \mid G)}
\left[
D_{\mathrm{KL}}\!\left(
q(z^{t-1} \mid z^t, z^0)
\,\|\,
p_\theta(z^{t-1} \mid z^t)
\right)
\right]
\label{eq:joint_lt}
\end{equation}

\paragraph{Terminal prior KL.}
With the $\mathbf{u}^0$-keyed forward process, the training terminal factors as
\begin{equation}
p(z^T \mid n_G)
=
\mathcal{N}(\mu_{\mathrm{LSP}},\,\sigma_{\mathrm{LSP}}^2 I)
\times
\prod_{1 \leq i,j \leq n} \mathrm{Bernoulli}(p_{ij}^0)
\end{equation}
where $p_{ij}^0 = 1 - \exp(-e^{u_i^0 + u_j^0})$ depends on the clean graph. The prior KL
decomposes as
\begin{multline}
D_{\mathrm{KL}}\!\left[q(z^T \mid G)\,\|\,p(z^T \mid n_G)\right]
=
D_{\mathrm{KL}}\!\left[q(\mathbf{u}^T \mid \mathbf{u}^0)\,\|\,\mathcal{N}(\mu_{\mathrm{LSP}}, \sigma_{\mathrm{LSP}}^2 I)\right]
\\
+\,
D_{\mathrm{KL}}\!\left[q(\mathbf{E}^T \mid \mathbf{E}^0, \mathbf{u}^0)\,\|\,\prod_{1 \leq i,j \leq n}\mathrm{Bernoulli}(p_{ij}^0)\right]
\label{eq:joint_prior_kl}
\end{multline}
Both terms vanish as $\bar{\alpha}_T \to 0$ for the $\mathbf{u}^0$-keyed forward process: the
continuous component converges to $\mathcal{N}(\mu_{\mathrm{LSP}}, \sigma_{\mathrm{LSP}}^2 I)$
and each row of $\bar{\mathbf{Q}}_E^{T,ij}$ converges to $(\mathbf{p}_{ij}^0)^\top$, so
$q(e_{ij}^T \mid e_{ij}^0, \mathbf{u}^0)$ converges to $\mathrm{Bernoulli}(p_{ij}^0)$
independently of $e_{ij}^0$. Note that this vanishing holds for the training terminal
conditioned on $\mathbf{u}^0$, not for the unconditional sampling terminal.

\paragraph{Decomposition of the diffusion loss.}
Because the forward posterior factorizes as
\begin{equation}
q(z^{t-1} \mid z^t, z^0)
=
q(\mathbf{u}^{t-1} \mid \mathbf{u}^t, \mathbf{u}^0)\,
q(\mathbf{E}^{t-1} \mid \mathbf{E}^t, \mathbf{E}^0, \mathbf{u}^0)
\label{eq:joint_posterior_factorization}
\end{equation}
the per-step loss $L_t(G)$ splits into independent continuous and discrete terms:
\begin{equation}
L_t(G) = L_t^u(G) + L_t^{\mathbf{E}}(G)
\label{eq:joint_kl_split}
\end{equation}
where $L_t^{\mathbf{E}}(G)$ is identical in form to the DiGress per-step loss
\citep{vignac2023digressdiscretedenoisingdiffusion}, and $L_t^u(G)$ is the additional term:
\begin{equation}
L_t^u(G)
=
\mathbb{E}_{q(\mathbf{u}^t \mid \mathbf{u}^0)}
\left[
D_{\mathrm{KL}}\!\left(
q(\mathbf{u}^{t-1} \mid \mathbf{u}^t, \mathbf{u}^0)
\,\|\,
p_\theta(\mathbf{u}^{t-1} \mid z^t)
\right)
\right]
\end{equation}

\paragraph{Practical training objective.}
As in DDPM \citep{ddpm} and DiGress \citep{vignac2023digressdiscretedenoisingdiffusion}, we do not
optimize the ELBO term-by-term. Instead, we optimize the simplified joint objective of
Equation \ref{eq:joint_loss}, replacing the per-step KL terms with cross-entropy and MSE losses
respectively. The prior KL terms are dropped since they vanish for the $\mathbf{u}^0$-keyed
forward process by construction.

\section{Additional experiments}
\label{additionalresults}
\subsection{DiGress on Tree dataset} In Table \ref{tab:treedataset} we also present results on the Tree dataset \citep{graphrnn} comparing DiGress with no aux features and our JointLSP with DiGress. We can observe that our proposed method outperforms DiGress on 3 out of 4 metrics.
\begin{table}[h]
\centering
\caption{Results on Tree dataset}
\label{tab:treedataset}
\resizebox{\linewidth}{!}{%
\begin{tabular}{@{}ccccc@{}}
\toprule
Method           & Degree                  & Assortativity & PLE           & CPL           \\ \midrule
DiGress          & 1073.41 ± 38.35         & 33.89 ± 0.81  & 242.14 ± 0.63 & \textbf{222.38 ± 6.51} \\
JointLSP+DiGress & \textbf{204.17 ± 26.83} & \textbf{15.45 ± 1.58}  & \textbf{134.26 ± 2.03} & 231.38 ± 1.19 \\ \bottomrule
\end{tabular}%
}
\end{table}

\subsection{Graph generation using JointLSP+SparseDiff}\label{sparsediffresults}
We augment SparseDiff \citep{qin2025sparsediff} with our prior and joint diffusion process,
referring to the result as JointLSP+SparseDiff. SparseDiff is particularly efficient at larger
graph sizes, processing only randomly selected node pairs at each step, which is why we use it as
the baseline for $n=500$ and the Ego and Community datasets.

The results on CoraChameleon across $n\in\{10,20,50,100,500\}$ are reported in Table
\ref{tab:sparsediff}. JointLSP+SparseDiff consistently improves over SparseDiff on most
metrics, with gains most pronounced on triangle count and assortativity. Notably, the improvement
holds at $n=500$, where the fixed-point pathology is most severe and the edge-independent prior
has the greatest influence on generated statistics.

The results on Ego and Community are reported in Table \ref{tab:egocommtable}. These datasets remain
challenging for both methods, as indicated by the uniformly large MMD ratios. This is likely due
in part to the presence of self-loops in Ego and the lack of auxiliary spectral features needed to
capture community structure. Nevertheless, JointLSP+SparseDiff improves over SparseDiff on all
metrics except clustering on Ego and assortativity on Community. Taken together, these results
support our main claim: the benefit of the proposed method is not that it universally optimizes
generation quality, but that it addresses a fundamental limitation of edge-independent diffusion
in the no-node feature setting.

\begin{table}[t]
\centering
\caption{MMD ratios on CoraChameleon datasets for SparseDiff and JointLSP+SparseDiff under
no-node features. Results are mean$\pm$std over 3 random seeds.
Bold: better of SparseDiff vs.\ JointLSP+SparseDiff.}
\label{tab:sparsediff}
\resizebox{\linewidth}{!}{%
\begin{tabular}{@{}lcccccc@{}}
\toprule
 & Degree $\downarrow$ & Clustering $\downarrow$ & Triangles $\downarrow$
 & Assortativity $\downarrow$ & PLE $\downarrow$ & CPL $\downarrow$ \\
\midrule
\multicolumn{7}{l}{\textit{SparseDiff (no features)}} \\
\quad $n=10$  & $17.23_{\pm 0.87}$ & $\mathbf{1.99}_{\pm 0.02}$ & $6.68_{\pm 0.07}$  & $69.77_{\pm 18.17}$ & $431.13_{\pm 133.46}$ & $114.33_{\pm 21.79}$ \\
\quad $n=20$  & $49.10_{\pm 2.08}$ & $1.41_{\pm 0.00}$          & $3.67_{\pm 0.43}$  & $30.17_{\pm 4.03}$  & $27.25_{\pm 2.67}$    & $0.92_{\pm 0.12}$  \\
\quad $n=50$  & $17.51_{\pm 0.98}$ & $1.55_{\pm 0.04}$          & $25.87_{\pm 0.77}$ & $60.65_{\pm 4.84}$  & $\mathbf{4.98}_{\pm 0.59}$    & $\mathbf{3.69}_{\pm 0.39}$  \\
\quad $n=100$ & $30.43_{\pm{0.15}}$           &   $3.43_{\pm{0.10}}$                        & $66.65_{\pm {0.76}}$                 & $50.44_{\pm{2.34}}$                  & $15.33_{\pm{0.10}}$                    & $18.92_{\pm{0.10}}$                   \\
\quad $n=500$ &  $3.93_{\pm0.01}$                 & $3.23_{\pm0.02}$                        & $113.81_{\pm7.41}$                 & $39.04_{\pm0.14}$                 &       $\textbf{2.13}_{\pm0.00}$           & $\textbf{7.93}_{\pm0.00}$                   \\
\midrule
\multicolumn{7}{l}{\textit{JointLSP+SparseDiff (no features)}} \\
\quad $n=10$  & $\mathbf{14.18}_{\pm 0.64}$ & $2.00_{\pm 0.01}$          & $\mathbf{6.43}_{\pm 0.14}$  & $\mathbf{11.38}_{\pm 7.16}$ & $\mathbf{37.70}_{\pm 14.13}$  & $\mathbf{37.19}_{\pm 3.19}$  \\
\quad $n=20$  & $\mathbf{29.08}_{\pm 1.11}$ & $\mathbf{1.40}_{\pm 0.00}$ & $\mathbf{2.19}_{\pm 0.23}$  & $\mathbf{15.31}_{\pm 0.95}$ & $\mathbf{15.75}_{\pm 1.29}$   & $\mathbf{0.54}_{\pm 0.12}$            \\
\quad $n=50$  & $\mathbf{13.29}_{\pm 0.32}$ & $\mathbf{1.49}_{\pm 0.01}$ & $\mathbf{15.11}_{\pm 0.34}$ & $\mathbf{36.18}_{\pm 1.16}$ & $6.29_{\pm 0.19}$             & $5.76_{\pm 0.16}$            \\
\quad $n=100$ & $\mathbf{19.07}_{\pm 0.49}$          & $\mathbf{3.07}_{\pm 0.07}$          & $\mathbf{36.00}_{\pm 0.84}$          & $\mathbf{24.88}_{\pm 1.18}$          & $\mathbf{11.14}_{\pm 0.23}$            & $\mathbf{16.24}_{\pm 0.21}$           \\
\quad $n=500$ & $\textbf{3.82}_{\pm0.01}$                          & $\textbf{2.99}_{\pm0.02}$                       & $\textbf{35.41}_{\pm 4.21}$                      & $\textbf{28.89}_{\pm0.27}$                          &      $2.20_{\pm0.00}$                     &    $8.37_{\pm0.00}$                         \\
\bottomrule
\end{tabular}%
}
\end{table}

\begin{table}[h]
\centering
\caption{Comparison of graph generation with JointLSP+SparseDiff and SparseDiff on Ego and
Community datasets. Results are mean$\pm$std over 3 random seeds.}
\label{tab:egocommtable}
\resizebox{\linewidth}{!}{%
\begin{tabular}{@{}lcccccc@{}}
\toprule
 & Degree $\downarrow$ & Clustering $\downarrow$ & Triangles $\downarrow$
 & Assortativity $\downarrow$ & PLE $\downarrow$ & CPL $\downarrow$ \\
\midrule
\multicolumn{7}{l}{\textit{Ego}} \\
\quad SparseDiff          & $768.06_{\pm 107.44}$ & $\mathbf{14.78}_{\pm 0.25}$ & $85.20_{\pm 12.21}$ & $1715.86_{\pm 53.10}$ & $1696.14_{\pm 183.27}$ & $183.07_{\pm 31.15}$ \\
\quad JointLSP+SparseDiff & $\mathbf{302.50}_{\pm 51.36}$ & $15.48_{\pm 0.57}$ & $\mathbf{12.41}_{\pm 1.25}$ & $\mathbf{785.72}_{\pm 141.52}$ & $\mathbf{750.94}_{\pm 145.86}$ & $\mathbf{28.95}_{\pm 6.31}$ \\
\midrule
\multicolumn{7}{l}{\textit{Community}} \\
\quad SparseDiff          & $91.57_{\pm 24.34}$ & $6.86_{\pm0.63}$  & $8.12_{\pm0.96}$  & $0.99_{\pm 1.27}$  & $226.18_{\pm49.36}$  & $21917.62{\pm157.43}$ \\
\quad JointLSP+SparseDiff & $\textbf{38.70}_{\pm{14.18}}$  & $\textbf{5.08}_{\pm{0.44}}$  & $\textbf{1.54}_{\pm{0.44}}$  & $\textbf{9.02}_{\pm{6.79}}$  & $\textbf{105.09}_{\pm{39.66}}$  & $\textbf{18251.36}_{\pm{550.46}}$ \\
\bottomrule
\end{tabular}%
}
\end{table}

\subsection{DiGress with degree features}
In the JointDiffusion+LSP proposal we derive the latent weights for each node by using 
the node degree as a proxy. This naturally leads to the question: is this similar to 
supplying node degree as auxiliary features to the denoiser, and whether 
DiGress~\citep{vignac2023digressdiscretedenoisingdiffusion} could also benefit from such 
augmentation? To answer this question, we conduct an experiment where we compute node 
degrees from the noisy graph at each denoising step and feed them as auxiliary features. 
More specifically, we compute the normalized degree $\tilde{d}_i = d_i / (n - 1)$ as a 
node-level feature, and the mean $\mu_d$ and standard deviation $\sigma_d$ of 
$\tilde{d}_i$ across nodes as graph-level features, recomputed from the noisy adjacency 
$A_t$ at every reverse step. The results are shown in Table~\ref{tab:digress_degree}. We can observe that JointLSP+DiGress outperforms DiGress+Degree features on metrics like degree, triangles, PLE and CPL. Thus, DiGress augmented with degree 
features does not recover the performance of JointLSP. This is consistent with 
Lemma~\ref{statsconstantlemma}: at high noise levels $A_t$ is approximately E-R, so all 
$\tilde{d}_i \approx \bar{p}$ and the features collapse to a near-constant, carrying no 
structural signal. The key distinction is that the LSP does not compute degree on the 
noisy graph, it supplies this information inherently by construction. The degree-informed heterogeneity information is baked into the terminal prior itself via 
$u^0$, making node-level variation available throughout the entire reverse trajectory 
rather than only when the graph is sufficiently clean.

\begin{table}[h]
\centering
\caption{DiGress + degree features on CoraChameleon-Node100. JointLSP+DiGress ($k=0$, no features) is included for direct comparison. Bold: best result excluding the $k=17$ ceiling.}
\label{tab:digress_degree}
\resizebox{\linewidth}{!}{%
\begin{tabular}{@{}lcccccc@{}}
\toprule
 & Degree $\downarrow$ & Clustering $\downarrow$ & Triangles $\downarrow$
 & Assortativity $\downarrow$ & PLE $\downarrow$ & CPL $\downarrow$ \\
\midrule
\multicolumn{7}{l}{\textit{DiGress+Degree}} \\
\quad $k=1$  & $10.67_{\pm 0.14}$   & $\textbf{2.04}_{\pm 0.02}$  & $2.41_{\pm 0.14}$ & $\textbf{57.66}_{\pm 0.76}$  & $10.81_{\pm 0.17}$  & $45.54_{\pm 0.56}$  \\
\midrule
\multicolumn{7}{l}{\textit{JointLSP+DiGress}} \\
\quad $k=0$ & $\textbf{10.50}_{\pm 0.16}$ & ${2.07}_{\pm 0.10}$ & $\mathbf{2.33}_{\pm 0.15}$ & ${59.72}_{\pm 0.75}$ & $\textbf{10.79}_{\pm 0.19}$ & $\textbf{44.99}_{\pm 0.29}$ \\
\midrule
\multicolumn{7}{l}{\textit{DiGress + 17 features}} \\
\quad $k=17$ & $0.93_{\pm 0.00}$ & $1.95_{\pm 0.00}$ & $1.68_{\pm 0.00}$ & $3.55_{\pm 0.00}$ & $1.53_{\pm 0.00}$ & $1.92_{\pm 0.00}$ \\
\bottomrule
\end{tabular}%
}
\end{table}
\paragraph{DiGress+degree does not preserve node heterogeneity.} We compute the variance in node propensity scores as defined in Section~\ref{nodepropsection}, comparing
JointLSP+DiGress, DiGress, and DiGress+Degree — where node degree is supplied as both a node-level
and graph-level auxiliary feature — in Figure~\ref{fig:propexptsdegree}. JointLSP+DiGress
maintains consistently high node heterogeneity throughout the reverse trajectory, starting from
$t=T$ where the LSP prior already induces diverse connection propensities. DiGress+Degree, by
contrast, starts near zero at high noise levels and builds heterogeneity only as denoising
progresses. This confirms that degree features are reactive: they can only provide structural
signal once the graph is already partially denoised, whereas LSP provides it from initialization.
The two mechanisms are therefore fundamentally different, even though both draw on degree
information.

\begin{figure}[h]
    \centering
    \begin{subfigure}{0.47\linewidth}
        \centering
        \includegraphics[width=\linewidth]{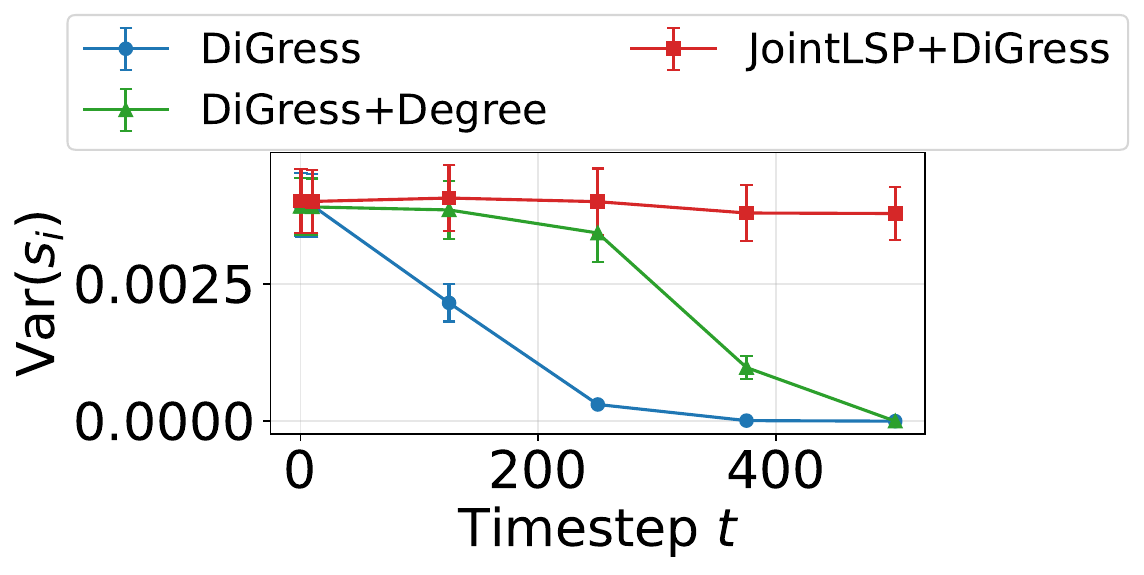}
        \caption{Forward Noising Process.}
          \label{fig:vfwdnormplot}
    \end{subfigure}
    \begin{subfigure}{0.47\linewidth}
        \centering
        \includegraphics[width=\linewidth]{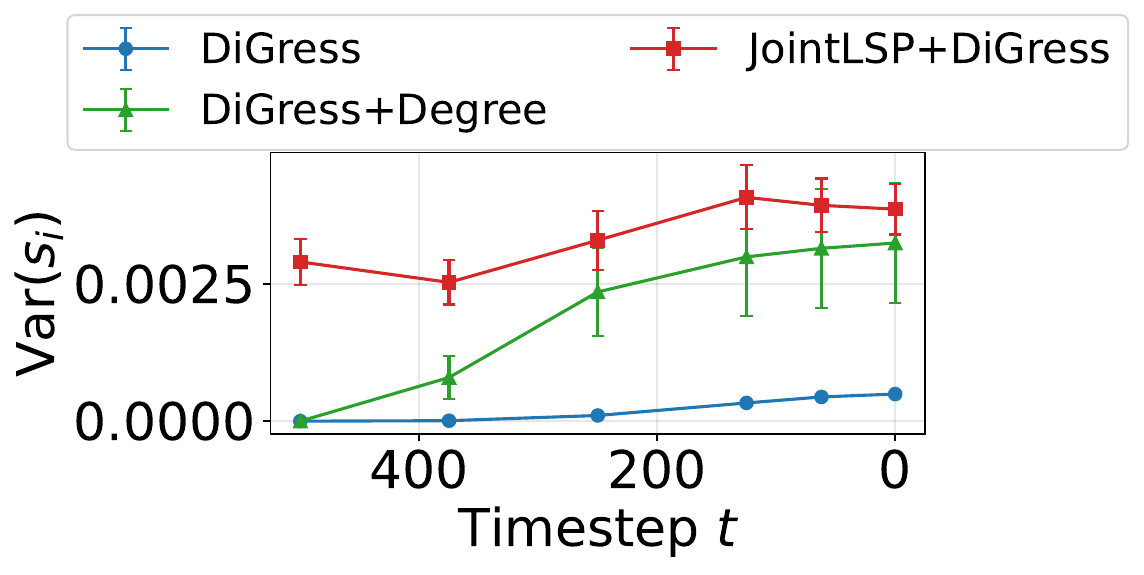}
        \caption{Reverse Denoising Process.}
        \label{fig:vreversedegree}
    \end{subfigure}
    \caption{We compare the variance in node propensity scores for JointLSP+DiGress, DiGress+degree and DiGress. We can observe that our proposal yields node heterogeneity throughout the diffusion trajectory even at highly noisy states as opposed to DiGress with degree features which only improve the node heterogeneity when the graph is already denoised.}
    \label{fig:propexptsdegree}
\end{figure}

\paragraph{Random node features}
Many works such as \citep{abboud2021surprisingpowergraphneural,sato2021randomfeaturesstrengthengraph,gonon2026universalityapproximationratesgraph} suggest incorporating random node features improve GNN expressivity, we test if this also helps improve generation quality in graph diffusion models. We verify this empirically on the CoraChameleon-Node100 dataset and compare DiGress with no aux features, DiGress+RND which denotes the random node features and JointLSP+Digress. The setup is as follows: The diffused node state is unchanged ($X = \mathbf{1}$), and at each denoising step we concatenate random features $r \in \mathbb{R}^{n \times d}$, drawn i.i.d. from $\mathcal{N}(0, I)$ with dim=8 to the node input. The results are presented in Table \ref{tab:rnd}, adding random node features does not help with generation because firstly RND leaves the corruption process and the terminal distribution unchanged: Lemma \ref{lem:ER} still applies and the terminal distribution is still E-R, and sampling still begins from a graph with no node-level structure. RND alters only the denoiser input, not where the reverse trajectory starts. Second, the heterogeneity it supplies is uncalibrated, as the features are drawn independently of the data, whereas LSP is calibrated from the training data.

\begin{table}[h]
\centering
\caption{Results on CoraChameleon-Node100 dataset comparing DiGress with random node features}
\label{tab:rnd}
\resizebox{\textwidth}{!}{%
\begin{tabular}{@{}ccccccc@{}}
\toprule
Method  & Degree            & Clustering & Triangles  & Assortativity & PLE       & CPL        \\ \midrule
DiGress & \textbf{4.53±0.1} & 11.33±0.27 & 14.82±0.12 & 101.95±2.13   & 4.64±0.17 & 39.68±1.14 \\
DiGress+RND      & 5.48±0.13           & 12.50±1.18         & 10.13±0.04         & 127.06±6.20         & \textbf{4.14±0.41} & \textbf{24.01±1.57} \\
JointLSP+DiGress & \textbf{10.50±0.16} & \textbf{2.07±0.10} & \textbf{2.33±0.15} & \textbf{59.72±0.75} & 10.79±0.19         & 44.99±0.29          \\ \bottomrule
\end{tabular}%
}
\end{table}

\subsection{LSP improves generation quality throughout diffusion trajectory}
To further demonstrate the effectiveness of the proposed JointLSP framework, we track the MMD ratio ($r$) presented in Section \ref{experimentsgraphgeneration} at various timesteps $t \in \{500,50,0\}$ for both JointLSP+DiGress and DiGress on CoraChameleon-Node100 dataset. At each
timestep $t$, we use the network's predicted clean edge probabilities $\hat{\mathbf{p}}_{ij}^{E,t}$
to construct a graph and evaluate MMD against the test set. The results are shown in
Figure~\ref{fig:trajectory_mmd}. JointLSP+DiGress consistently achieves lower MMD ratios than DiGress at all timesteps across all three structural statistics except Degree, where we surmise the LSP over-corrects the degree when inducing node heterogeneity. At $t=500$, where the input is close to the terminal prior,
DiGress produces graphs with very high MMD as the ER prior provides no structural information making it difficult for the denoiser. JointLSP+DiGress already achieves substantially lower MMD
at this stage, confirming that the LSP along with the joint diffusion process provides a more informative starting point. As $t \to 0$ both models improve, but JointLSP+DiGress maintains a consistent advantage,
particularly on assortativity and triangle count. This supports our main claim that our proposal indeed improves the diffusion pipeline and does not result in the fixed-point pathology we have pointed out.

\begin{figure}[h]
    \centering
    \includegraphics[width=0.9\textwidth]{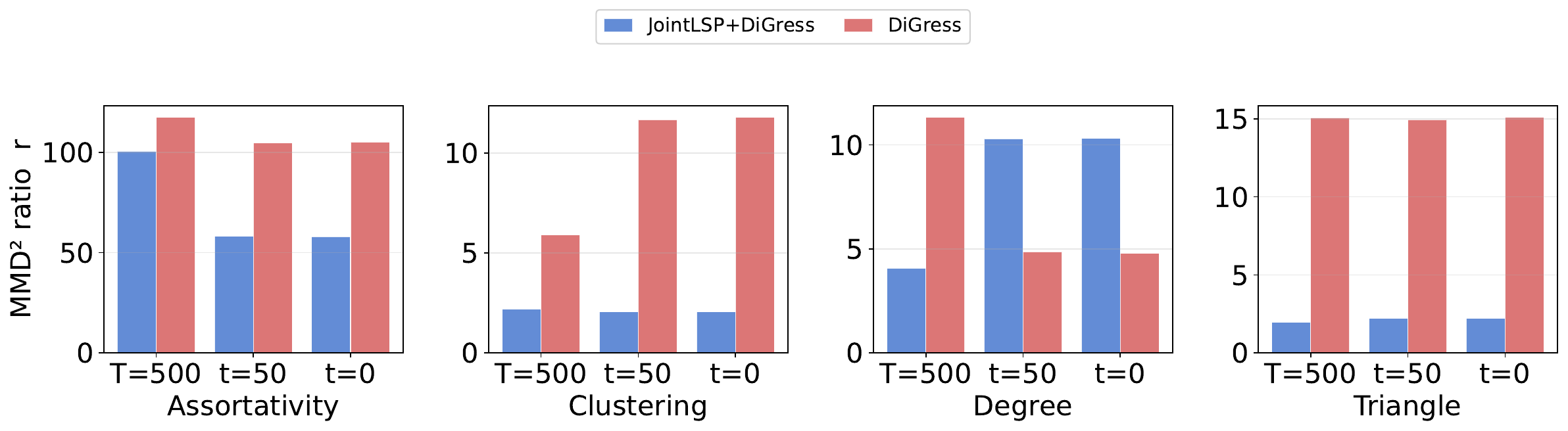}
    \caption{We track the MMD ratio across various diffusion timesteps for JointLSP+DiGress and DiGress. Our proposed framework consistently generates good quality graphs even at intermediate steps.}
    \label{fig:trajectory_mmd}
\end{figure}

\subsection{LSP induces node heterogeneity}
To verify that the proposed prior preserves node distinguishability, we simulate graphs (\(m=100\) graphs) with increasing node sizes \(n=\{10,20,50,100,500,1000\}\) under three random graph models:

\begin{enumerate}
    \item Dense Bernoulli baseline: \(A_{ij} \sim \texttt{Bern}(\rho_{\text{dense}})\), where \(\rho_{\text{dense}} = 0.1\).
    \item Sparse mean-field Bernoulli: \(A_{ij} \sim \texttt{Bern}(\rho_{\text{sparse}})\), where \(\rho_{\text{sparse}} = \frac{\bar{k}}{n-1}\) and \(\bar{k}=5\), and
    \item Latent Sociability Prior: \(A_{ij} \sim \texttt{Bern}(P_{ij})\), where $P_{ij} = 1-e^{(-ce^{u_i+u_j})}$. 
    \end{enumerate}
    
    To quantify node-level heterogeneity, we measure the within-graph degree variance \(\mathrm{Var}_i(d_i^{(m)})\) and within-graph degree mean \(\mathbb{E}_i[d_i^{(m)}]\), and define the dispersion index
\begin{equation}
    \varphi = \frac{\mathrm{Var}_i[d_i^{(m)}]}{\mathbb{E}_i[d_i^{(m)}]},
\end{equation}
which we then average across all sampled graphs. This statistic captures how much node degrees vary within a graph. The results are shown in Figure \ref{fig:exp2}. As \(n \to 1000\), both the dense and sparse Bernoulli baselines yield \(\varphi \approx 1\), indicating that edge-independent models remain tightly constrained by their mean degree and therefore, by design, cannot induce meaningful node heterogeneity. In contrast, LSP consistently yields \(\varphi > 1\), showing that it preserves non-trivial degree variation as graph size increases. This confirms that, unlike ER-type models, LSP does not collapse to a homogeneous fixed-point.
\begin{figure}[h]
    \centering
    \includegraphics[width=0.65\textwidth]{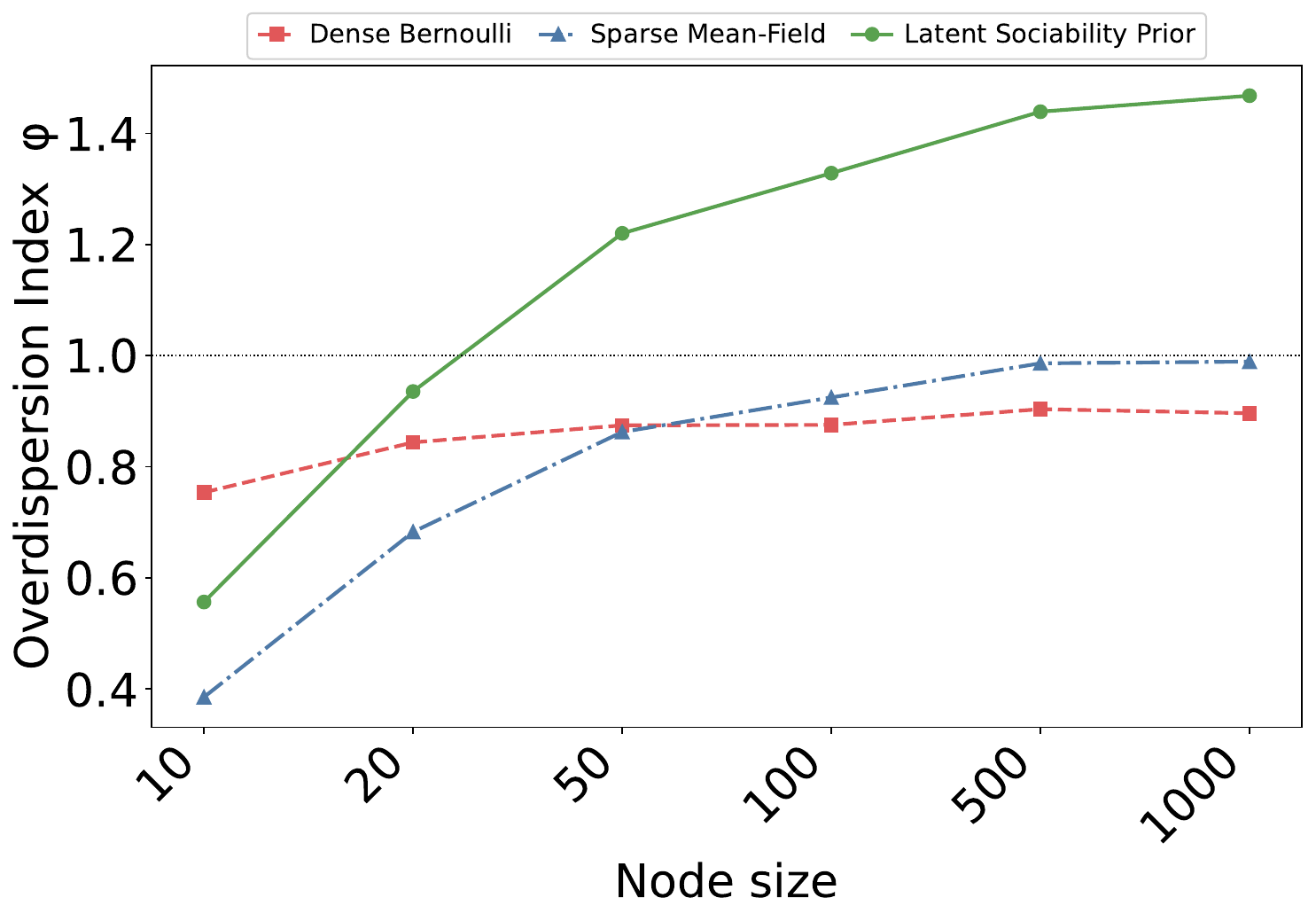}
    \caption{Dispersion index $\varphi$ as a function of node size under three prior models. Unlike edge-independent Bernoulli baselines, LSP preserves non-trivial node heterogeneity as graph size increases.}
    \label{fig:exp2}
\end{figure}
\subsection{LSP can generate higher triangle and cycle counts} \label{app:lemma43sim}
We conduct a second experiment to study how higher-order graph statistics scale with increasing graph size under a fixed mean-degree constraint. This distinction is important, since large triangle or cycle counts can be obtained trivially by generating dense graphs. Specifically, for node sizes \(n=\{10,20,50,100,500,1000\}\), we simulate \(100\) graphs from each random graph model while fixing the mean degree to the same target value. For each graph, we measure the number of triangles, 4-cycles, and 4-cliques, and study how these quantities scale as 
$n$ increases. The results, shown in Figure \ref{fig:higherorderstatscounts}, indicate that LSP consistently produces substantially larger higher-order structure counts than a sparse mean-field Bernoulli prior under the same mean-degree budget. This shows that the gain is not a consequence of simply generating denser graphs, but of inducing non-trivial dependencies that support richer local structure.

\begin{figure}[h]
    \centering
    \begin{subfigure}{0.28\linewidth}
        \centering
        \includegraphics[width=\linewidth]{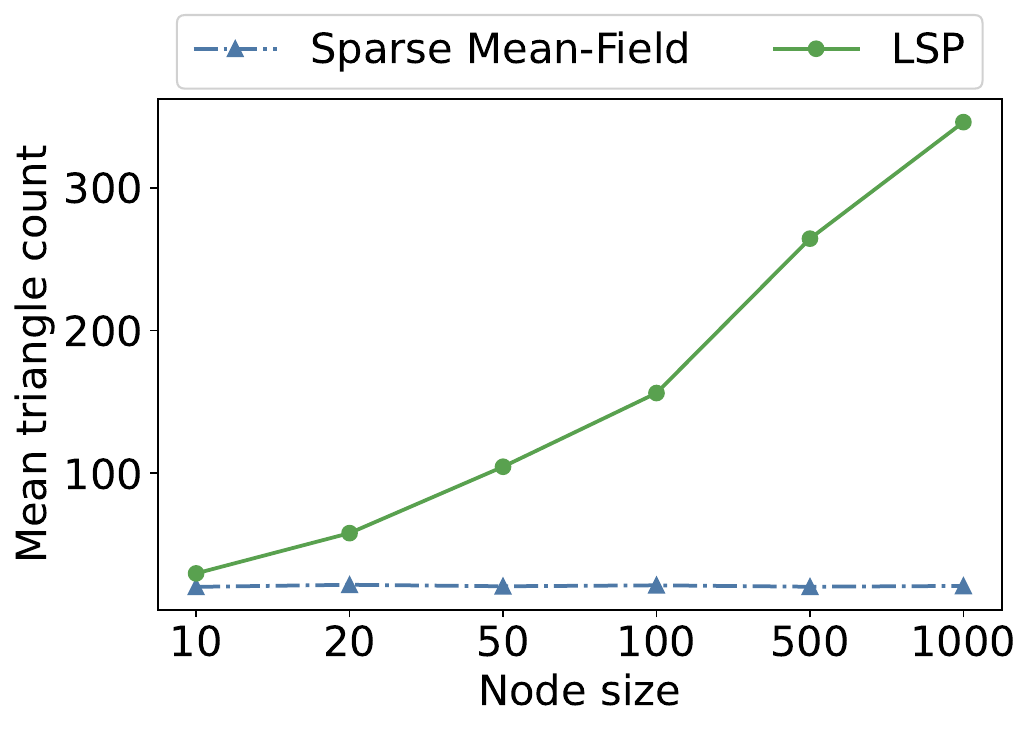}
        \caption{Triangle Count.}
          \label{fig:tricount}
    \end{subfigure}
    \begin{subfigure}{0.28\linewidth}
        \centering
        \includegraphics[width=\linewidth]{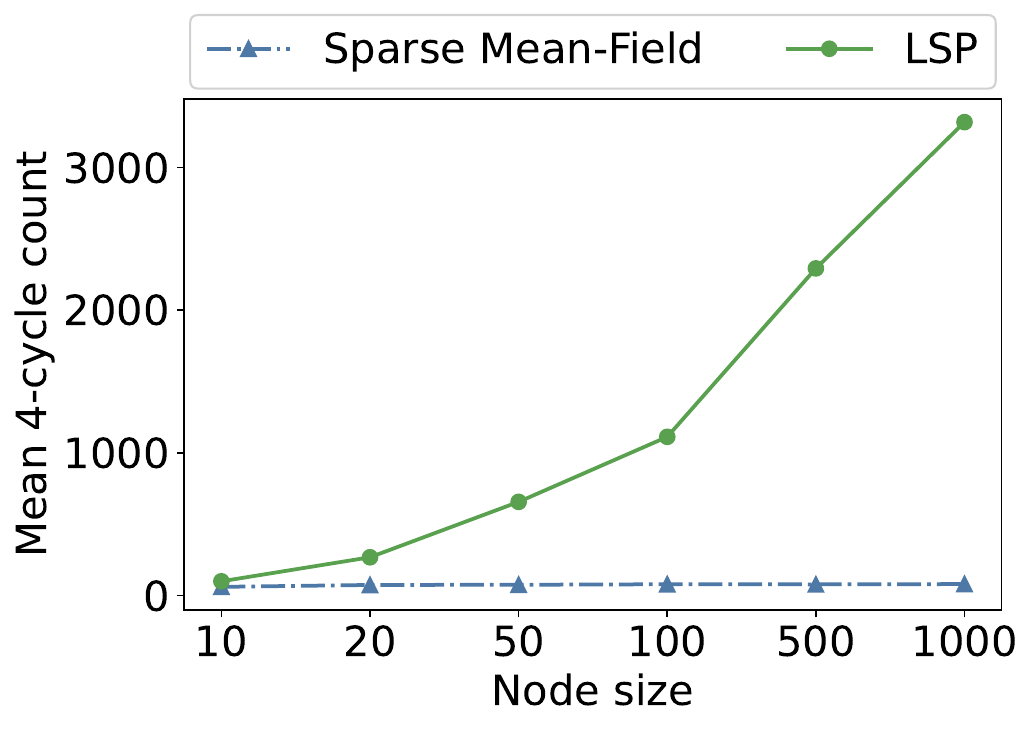}
        \caption{4-Cycle Count.}
        \label{fig:4cyclecount}
    \end{subfigure}
        \begin{subfigure}{0.28\linewidth}
        \centering
        \includegraphics[width=\linewidth]{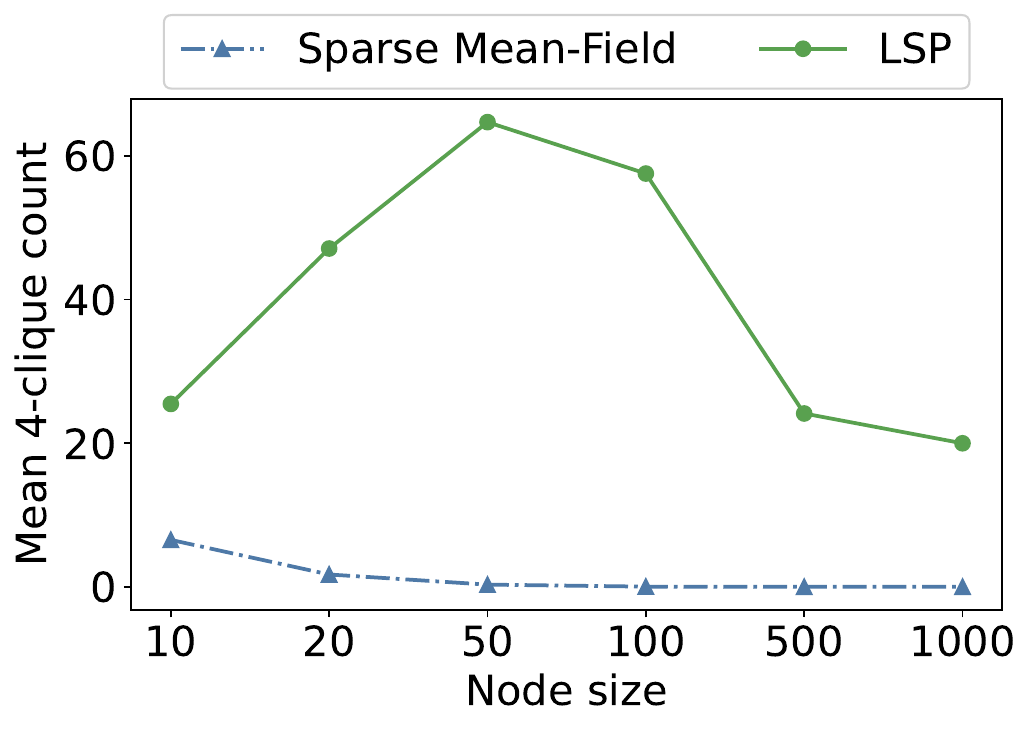}
        \caption{4-Clique Count.}
        \label{fig:4cliquecount}
    \end{subfigure}
    \caption{We simulate graphs using the sparse Bernoulli and our proposed LSP and compute how many triangles, 4-cycles and 4-cliques can the prior generate at a fixed mean-degree resolution while the number of nodes is increased.}
    \label{fig:higherorderstatscounts}
\end{figure}

\subsection{Structural features are uninformative for E-R graphs} \label{{app:lemma43sim}}
In Figure~\ref{fig:lemma4.3} we empirically validate Lemma~\ref{statsconstantlemma}. We simulate 100 graphs of node sizes $n \in \{100, 500, 1000, 2000, 5000\}$ under two 
priors at matched edge density $\rho = 0.1$. For E-R, each edge is drawn independently 
with probability $p = 0.1$. For the LSP, we fix $\sigma_\mathrm{LSP} = 1.0$ and solve for $\mu_\mathrm{LSP}$ such that $\mathbb{E}[p_{ij}] = 0.1$ using the calibration procedure of Algorithm~\ref{alg:calibrate_lsp}. Graphs are then sampled directly from the prior by drawing $u_i \sim \mathcal{N}(\mu_\mathrm{LSP}, \sigma^2_\mathrm{LSP})$ and connecting each pair $(i,j)$ with probability $p_{ij} = 1 - e^{-e^{u_i + u_j}}$. Note 
that no diffusion process is involved; we sample from the terminal prior directly to isolate the effect of the prior on structural feature statistics. For each graph, we compute the per-node 
triangle count $C_i$ and measure $\mathrm{std}(C_i / \bar{C})$ where 
$\bar{C} = \mathrm{mean}_i(C_i)$. As $n$ grows, this quantity vanishes for E-R graphs, 
confirming that structural features become increasingly uniform across nodes and therefore 
uninformative to a denoiser. The LSP maintains a stable, non-trivial spread across all 
graph sizes.

\begin{figure}[h]
    \centering
    \includegraphics[width=0.55\textwidth]{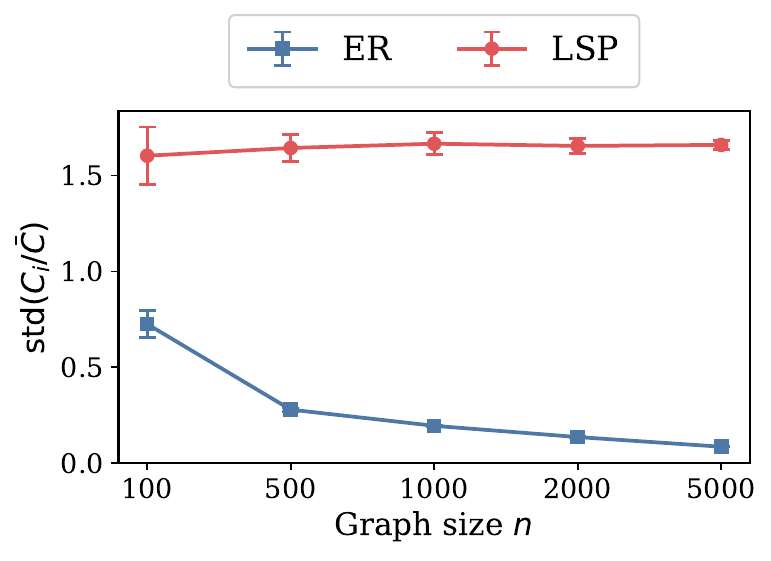}
    \caption{We empirically validate the hypothesis that in the large graph limit the structural features derived from an E-R are uninformative for the denoiser.}
    \label{fig:lemma4.3}
\end{figure}

\section{Dataset generation} \label{datasetgen}
\paragraph{CoraChameleon datasets.} Existing datasets like Community and Planar graphs which are usually used in evaluation of the graph diffusion models do not accurately reflect the characteristics of real-world graphs. Hence, we generate 5 new datasets by sampling from two real-world datasets Cora \citep{Cora} and Chameleon \citep{wikipediadatasets}. 

We apply 3 sampling strategies to the source graphs: random walk based where a biased random walk starts from a random node. The walk continues until $\sim1.5$x the target node size has been visited, then the BFS-largest connected component is taken. After half of the allowed attempts, the starting node is biased toward high-degree nodes to improve connectivity. Forest-fire \citep{leskovec2007graphevolutiondensificationshrinking} which starting from a seed node, each active node stochastically ignites a geometrically-distributed number of its un-burned neighbors with forward probability $p_\mathrm{forward} = 0.7$ and backward probability $p_{\mathrm{backward}} = 0.3$. And a dense-core sampling strategy that randomly picks top-$10\%$ of highest degree nodes in the source graph as candidate neighbours and greedily scores how many edges are already in the currently selected set and picks the highest-scoring candidate and adds it. A density check is done once target size nodes are collected, rejects the subgraph if it isn't connected or if its average degree is below minimum average degree (default 4.0). We use a ratio of 40\% Random walk, 40\% Forest-fire and 20\% Dense core to generate our synthetic graphs. Additionally 100 graphs are generated which are divided into 60/40 as test and validation sets respectively. The hyperparameters are tuned on the validation set.

\paragraph{Ego network} Similar to SparseDiff \citep{qin2025sparsediff} we also evaluate our models on the Ego dataset \citep{graphrnn} which is a 3-hop ego network extracted from the Citeseer dataset \citep{Citeseer}.

\paragraph{Community dataset} We also evaluate on the community dataset \citep{graphrnn} which consists of 2 communities generated by an E-R model with $n = |V|/2$ nodes and $p = 0.3$. The inter-community edges are added with probability $0.05|V|$ with uniform probability. In Table \ref{tab:datasetstats} we provide the data statistics of the graphs used for the generation task.

\begin{table}[h]
\centering
\caption{Statistics for the datasets used in the graph generation experiments}
\label{tab:datasetstats}
\resizebox{10cm}{!}{%
\begin{tabular}{@{}cccc@{}}
\toprule
Dataset               & \#Graphs & \#Nodes      & \#Edges         \\ \midrule
CoraChameleon-Node10  & 3000     & 10           & {[}14,45{]}     \\
CoraChameleon-Node20  & 3000     & 20           & {[}27,120{]}    \\
CoraChameleon-Node50  & 3000     & 50           & {[}71,300{]}    \\
CoraChameleon-Node100 & 3000     & 100          & {[}172,600{]}   \\
CoraChameleon-Node500 & 1400     & 500          & {[}1282,4630{]} \\
EGO                   & 757      & {[}50,399{]} & {[}57,1071{]}   \\
Community             & 510      & {[}60,160{]} & {[}231,1956{]}  \\ \bottomrule
\end{tabular}%
}
\end{table}

\begin{figure}[h]
    \centering
    \begin{subfigure}{0.49\linewidth}
        \centering
        \includegraphics[width=\linewidth]{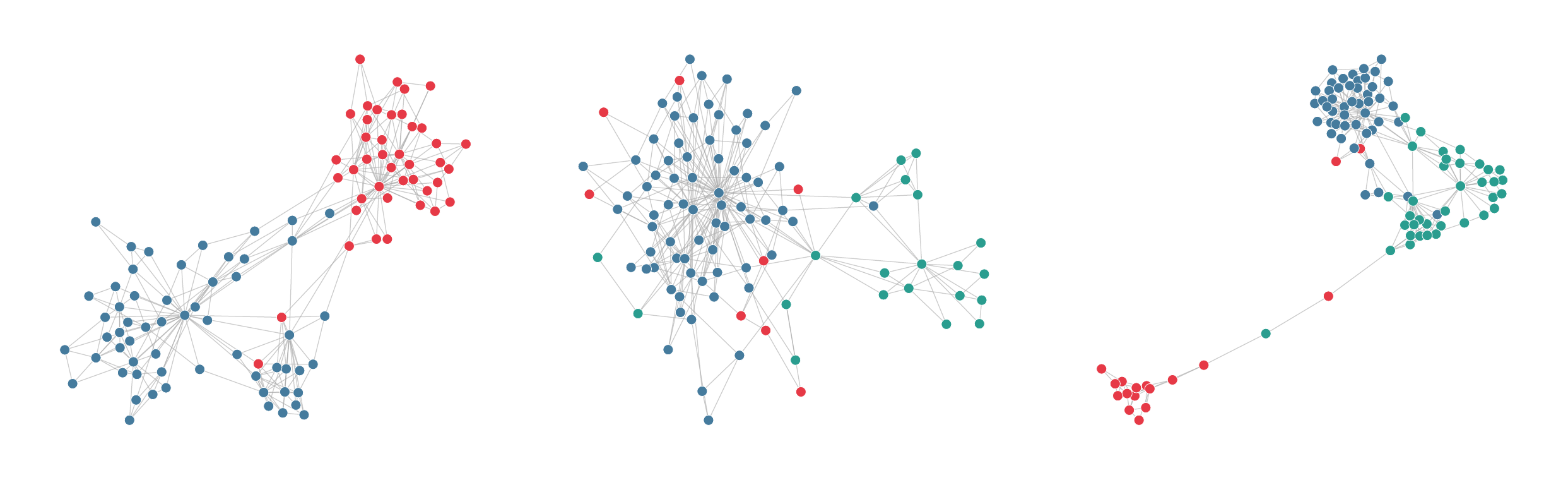}
        \caption{CoraChameleon-Node100}
          \label{fig:node50}
    \end{subfigure}
    \begin{subfigure}{0.49\linewidth}
        \centering
        \includegraphics[width=\linewidth]{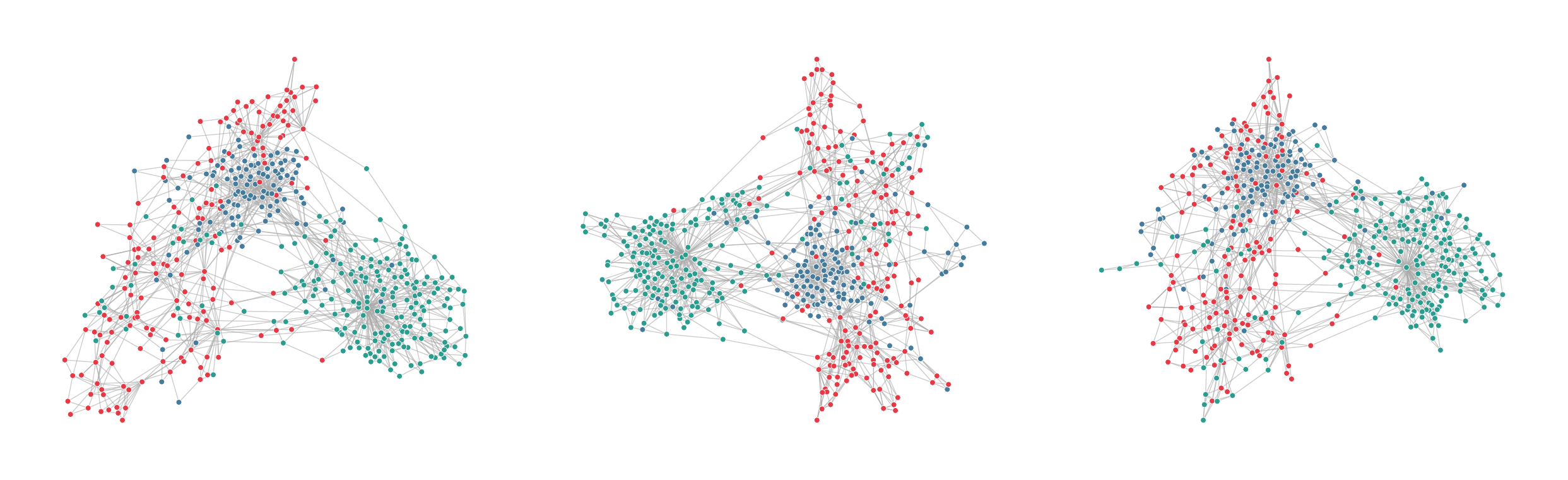}
        \caption{CoraChameleon-Node500}
        \label{fig:node500}
    \end{subfigure}
        \begin{subfigure}{0.49\linewidth}
        \centering
        \includegraphics[width=\linewidth]{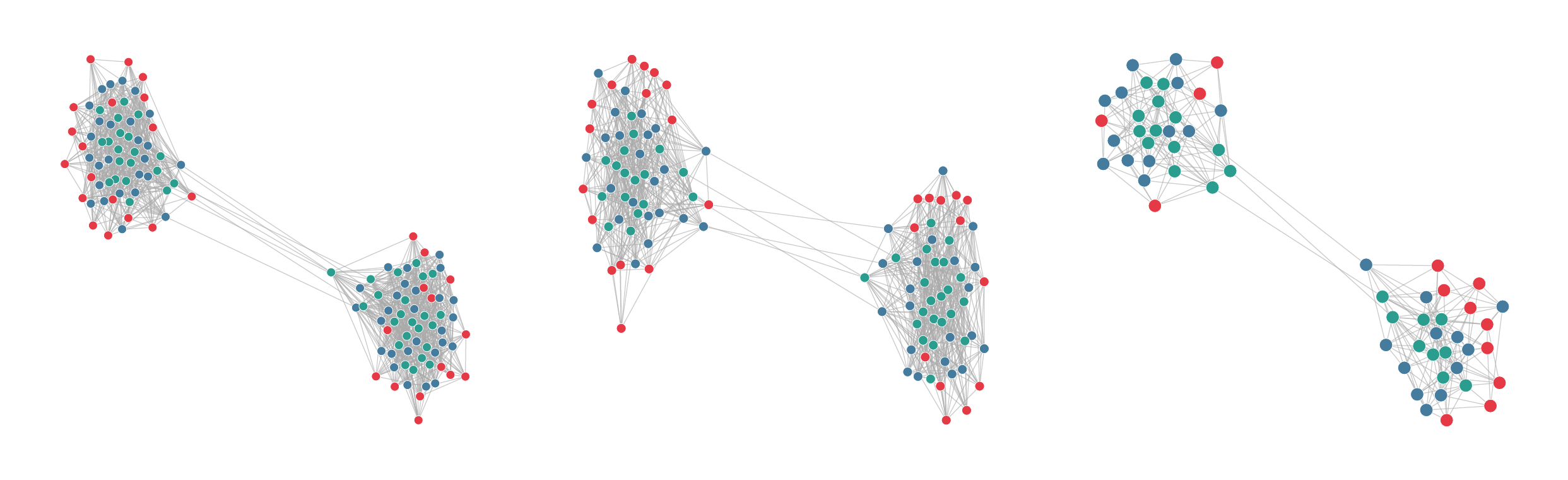}
        \caption{Community dataset}
        \label{fig:comm20}
    \end{subfigure}
    \caption{Sample graphs from the training set.}
    \label{fig:dataviz}
\end{figure}

\begin{figure}[h]
    \centering
    \begin{subfigure}{0.45\linewidth}
        \centering
        \includegraphics[width=\linewidth]{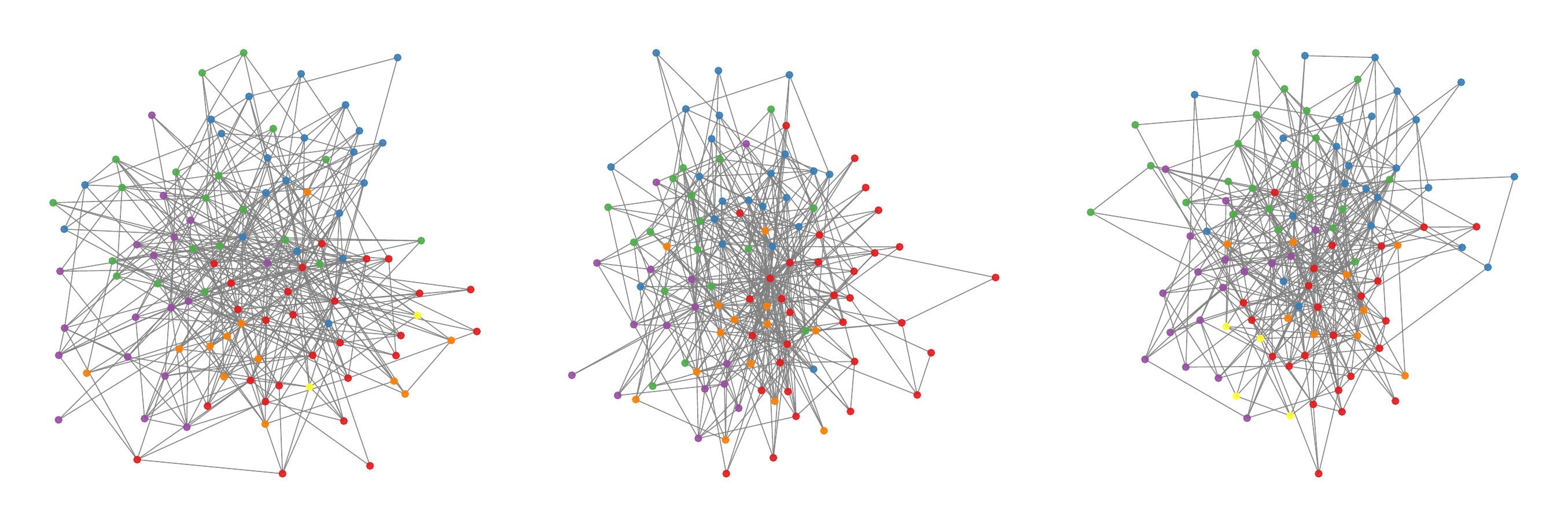}
        \caption{Sample CoraChameleon-Node100 graphs generated using JointLSP+DiGress.}
          \label{fig:lspnode100gen}
    \end{subfigure}
    \begin{subfigure}{0.45\linewidth}
        \centering
        \includegraphics[width=\linewidth]{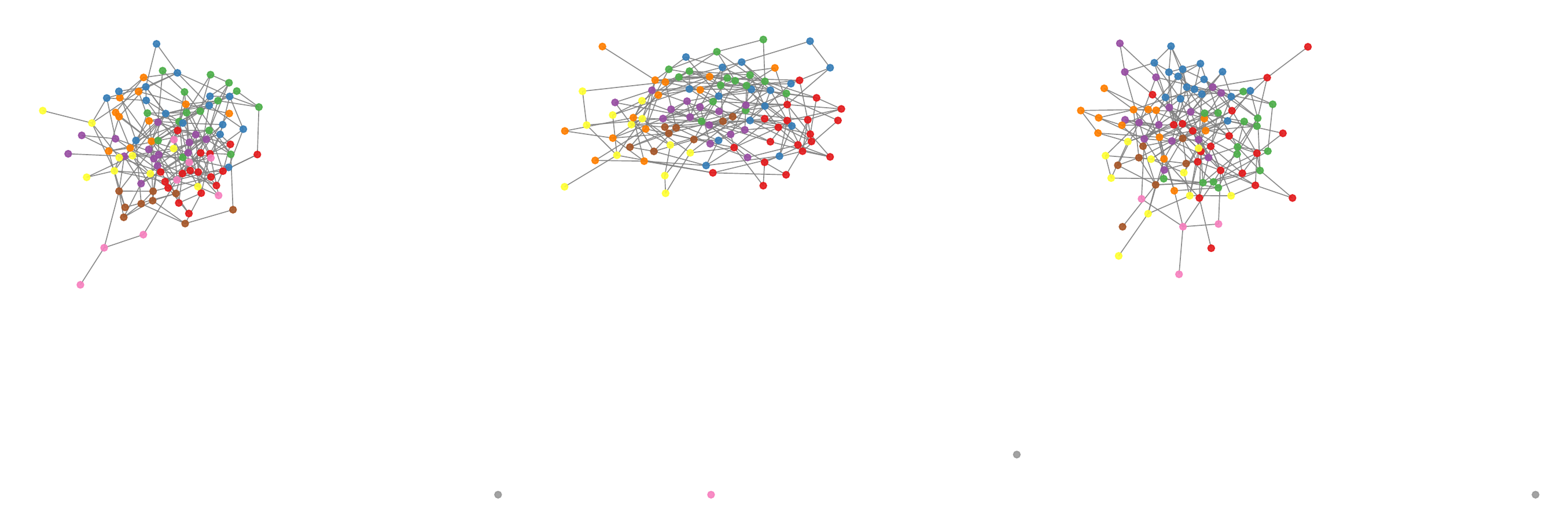}
        \caption{Sample CoraChameleon-Node100 graphs generated using DiGress.}
        \label{fig:dignode100}
    \end{subfigure}
    \caption{We visualize the generated graphs from both JointLSP+DiGress (a) and DiGress (b) on CoraChameleon-Node100 dataset. Clearly, we can observe that without node features the generation quality is impacted, but DiGress generates graphs which are dense and hairball-like even generating isolated nodes.}
    \label{fig:generatedgraphsN100}
\end{figure}

\subsection{Training details} \label{trainingdetails}
All experiments were implemented in PyTorch and PyTorch Geometric~\citep{Fey/Lenssen/2019}. 
Our model architecture follows DiGress and SparseDiff, using a Graph 
Transformer~\citep{dwivedi2021generalizationtransformernetworksgraphs} as the denoising 
network. All experiments were run on a single A40 GPU. Hyperparameters are provided in 
Table \ref{tab:hyperparams}. For each configuration, we train a single model and run 
inference three times with different random seeds; reported mean and standard deviation 
are computed over these three inference runs. This evaluation setup is consistent with the protocol followed in existing graph diffusion papers such as DeFoG \cite{qinmadeira2024defog} which reports the mean and std deviation averaged over 5 sampling runs and not 5 independent runs as it is computationally infeasible. For instance, training DiGress \cite{vignac2023digressdiscretedenoisingdiffusion} on Planar graphs requires nearly 50K epochs with a 10 layer transformer and DeFoG is trained for 100000 epochs with a 10 layer transformer. Additionally, computing the auxiliary features such as spectral adds further overhead. In our setup, we report the mmd ratios averaged over 3 sampling runs.

\begin{table}[h]
\centering
\caption{Hyperparameters used for the graph generation task.}
\label{tab:hyperparams}
\resizebox{10cm}{!}{%
\begin{tabular}{@{}cccccc@{}}
\toprule
Dataset &
  \#Layers &
  LR &
  \begin{tabular}[c]{@{}c@{}}Weight\\ Decay\end{tabular} &
  \begin{tabular}[c]{@{}c@{}}\#Training\\ Epochs\end{tabular} &
  \begin{tabular}[c]{@{}c@{}}Diffusion\\ Steps\end{tabular} \\ \midrule
\multicolumn{6}{c}{JointLSP+DiGress}                       \\ \midrule
CoraChameleon-Node10  & 6 & 0.0001 & 0.00012  & 5000 & 500 \\
CoraChameleon-Node20  & 6 & 0.0001 & 0.00012  & 5000 & 500 \\
CoraChameleon-Node50  & 6 & 0.0001 & 0.00012  & 5000 & 500 \\
CoraChameleon-Node100 & 6 & 0.0001 & 0.00012  & 5000 & 500 \\ \midrule
\multicolumn{6}{c}{JointLSP+SparseDiff}                    \\ \midrule
CoraChameleon-Node10  & 6 & 0.0001 & 0.00001  & 100  & 500 \\
CoraChameleon-Node20  & 6 & 0.0001 & 0.00001  & 100  & 500 \\
CoraChameleon-Node50  & 6 & 0.0001 & 0.00001  & 100  & 500 \\
CoraChameleon-Node100 & 6 & 0.0001 & 0.00001  & 100  & 500 \\
CoraChameleon-Node500 & 4 & 0.0001 & 0.00001  & 100  & 500 \\
EGO                   & 6 & 0.0001 & 0.000012 & 2000 & 500 \\
Community             & 6 & 0.0001 & 0.000012 & 1000 & 500 \\ \bottomrule
\end{tabular}%
}
\end{table}

\paragraph{Runtime analysis}
LSP does not change the complexity of the underlying model. The denoiser forward pass is $O(K n^2 d^2)$ for both DiGress and JointLSP+DiGress; the operations LSP adds is computing $p^0_{ij} = 1 - e^{-e^{u^0_i + u^0_j}}$ across node pairs and the DDPM update on the $n$ latent variables which are $O(n^2)$ and $O(n)$ respectively, smaller than the base cost by a factor $O(K d^2)$. Prior calibration (Algorithm 1) is a one time pre-processing step, not per-iteration. 

\begin{table}[h]
\centering
\caption{Runtime analysis}
\label{tab:runtime}
\resizebox{7cm}{!}{%
\begin{tabular}{@{}ccc@{}}
\toprule
Metric                & DiGress            & JointLSP+DiGress \\ \midrule
Model parameters      & {8,888,579} & 8,888,579        \\
Training step (in ms) & 493.3±4.7        & 489.8±5.0      \\
Peak GPU Memory (GiB) & {13.40}     & {13.39}   \\
Sampling (graph/sec)  & 9.50               & 9.33             \\ \bottomrule
\end{tabular}%
}
\end{table}

\paragraph{List of auxiliary features}
In \ref{tab:digress_feature_names} the details of node level and graph level features used as auxiliary features for DiGress is provided.

\begin{table}[h]
\centering
\caption{Auxiliary features used in DiGress.}
\label{tab:digress_feature_names}
\small
\setlength{\tabcolsep}{6pt}
\renewcommand{\arraystretch}{1.1}
\begin{tabular}{ll}
\toprule
Feature & Description \\
\midrule
\texttt{k3\_node}   & Node triangle count \\
\texttt{k4\_node}   & Node 4-cycle count \\
\texttt{k5\_node}   & Node 5-cycle count \\
\texttt{k3\_graph}  & Graph triangle count \\
\texttt{k4\_graph}  & Graph 4-cycle count \\
\texttt{k5\_graph}  & Graph 5-cycle count \\
\texttt{k6\_graph}  & Graph 6-cycle count \\
\texttt{n\_comp}    & Number of connected components \\
\texttt{nonLCC}     & Indicator for nodes outside the largest connected component \\
\texttt{eigvec1}    & First nontrivial Laplacian eigenvector \\
\texttt{eigvec2}    & Second nontrivial Laplacian eigenvector \\
\texttt{eigval1}    & First nonzero Laplacian eigenvalue \\
\texttt{eigval2}    & Second nonzero Laplacian eigenvalue \\
\texttt{eigval3}    & Third nonzero Laplacian eigenvalue \\
\texttt{eigval4}    & Fourth nonzero Laplacian eigenvalue \\
\texttt{eigval5}    & Fifth nonzero Laplacian eigenvalue \\
\texttt{n/max\_n}   & Normalized graph size ($n / n_{\max}$) \\
\bottomrule
\end{tabular}
\end{table}

\section{Broader impact} \label{broaderimpact}
Graph diffusion models have potential applications across a range of domains, including drug
discovery, protein interaction networks, and social network analysis, where the ability to
generate realistic graph structures could accelerate scientific discovery or inform network design. Like all generative models, these methods are not free from certain risks, for instance, realistic graph
generation could in principle be misused to generate fake synthetic social network structures or in the long term, to design molecules with adversarial properties. However, we note that these
risks are not specific to our contribution. This work addresses a theoretical limitation in
existing discrete graph diffusion pipelines and proposes a prior and a joint diffusion framework that improves
graph generation in the no-node-feature setting.

\clearpage

\clearpage


\end{document}